\documentclass[10pt]{article} % For LaTeX2e
\usepackage[preprint]{tmlr}

\usepackage{amsmath,amsfonts,bm}

\def\eqref#1{equation~\ref{#1}}
\def\1{\bm{1}}

\DeclareMathAlphabet{\mathsfit}{\encodingdefault}{\sfdefault}{m}{sl}
\SetMathAlphabet{\mathsfit}{bold}{\encodingdefault}{\sfdefault}{bx}{n}

\RequirePackage{algorithm}
\RequirePackage{algorithmic}
\usepackage{hyperref}
\usepackage{url}

\usepackage{microtype}
\usepackage{graphicx}
\usepackage{subcaption}
\usepackage{booktabs} % for professional tables
\usepackage{multirow}
\usepackage{hyperref}
\usepackage{comment}
\usepackage{enumitem}

\usepackage{amsmath}
\usepackage{amssymb}
\usepackage{mathtools}
\usepackage{amsthm}
\usepackage{algorithm}
\usepackage{algorithmic}
\usepackage[capitalize,noabbrev]{cleveref}

\theoremstyle{plain}
\newtheorem{theorem}{Theorem}[section]
\newtheorem{proposition}[theorem]{Proposition}

\theoremstyle{definition}

\theoremstyle{remark}

\usepackage[textsize=tiny]{todonotes}

\title{Collaborative Synthetic Data Generation for Knowledge Transfer in  Federated Learning}

\author{\name Maximilian Andreas Hoefler \email maximilian.andreas.hoefler@hhi.fraunhofer.de \\
      \addr Fraunhofer Heinrich Hertz Institute\\
      \AND
      \name Karsten Mueller \email karsten.mueller@hhi.fraunhofer.de \\
      \addr Fraunhofer Heinrich Hertz Institute
      \AND
      \name Wojciech Samek \email wojciech.samek@hhi.fraunhofer.de\\
      \addr Fraunhofer Heinrich Hertz Institute\\
      BIFOLD \\ Technical University Berlin 
      }

\def\month{MM}  % Insert correct month for camera-ready version
\def\year{YYYY} % Insert correct year for camera-ready version
\def\openreview{\url{https://openreview.net/forum?id=XXXX}} % Insert correct link to OpenReview for camera-ready version

\begin{document}

\maketitle

\begin{abstract}
One-shot federated learning (OSFL) addresses the communication overhead of 
federated learning by limiting training to a single round, but doing so 
without sacrificing model quality is non-trivial, particularly when client data distributions diverge. Recent work has addressed this challenge by aggregating client knowledge on the 
server through the construction of transferable synthetic datasets or 
distillates. However, most of these methods lack formal privacy guarantees, leaving a gap in jointly achieving low communication, 
robustness to heterogeneity, and rigorous privacy. We propose 
FedKT-CSD (Federated Knowledge Transfer via Collaborative Synthetic Data), 
a framework inspired by neural image compression that closes this gap by 
leveraging publicly pretrained autoencoders as a shared latent space. Each 
client encodes its private data in a single forward pass, computes 
class-conditional latent statistics, and transmits these to the server. The 
server aggregates these statistics via secure aggregation, adds calibrated 
differential privacy noise, and decodes a synthetic dataset for training a 
global model and further downstream tasks. This design provides formal 
$(\varepsilon,\delta)$-differential privacy by construction, while keeping 
client-side computation and communication lightweight. Despite operating 
under privacy constraints, FedKT-CSD is competitive with and even 
outperforms non-private baselines across diverse datasets and heterogeneity 
settings, and scales to a large number of clients. Our code is available 
at: \url{https://github.com/an7123/FedKT-CSD}
\end{abstract}

\section{Introduction}
\label{sec:introduction}

Federated Learning (FL)~\cite{FedAvg} has emerged as a powerful paradigm for training models collaboratively across decentralized data, under the promise of privacy preservation. Nonetheless, the presence of heterogeneous data across clients poses a significant challenge, often leading to slow convergence and suboptimal model performance. A wide range of approaches have been proposed to address this, from personalized federated learning (pFL)~\cite{APPLE,FedFOMO,pFedMe,PartialFed,FedPHP,FedRep} to data and representation sharing strategies~\cite{data_sharing_FL,FedGen,FedFTG,CCVR,FedFed, fedxds}. Despite their effectiveness, these methods typically require many communication rounds which can be problematic for real-world cross-device deployment. 

A particularly attractive direction for communication constrained applications is one-shot federated learning (OSFL), where clients communicate with the server exactly once. Recent OSFL methods leverage the aforementioned data sharing principle, showing the efficacy in improving one-shot performance. Methods such as FedD3~\cite{FedD3}, FedSD2C~\cite{FedSD2C}, FedCVAE~\cite{one-shot-cvae}, DENSE~\cite{Dense}, CoBoosting~\cite{coboosting} and \cite{fedpft} approach OSFL by having each client train a local generative model or distill its data, then transmit the result to the server, which aggregates the contributions into a shared synthetic dataset or ensemble. These approaches thus eliminate repeated parameter exchanges, naturally support heterogeneous architectures, and can produce datasets that can be reused for pretraining, distillation, or personalization.

However, existing OSFL methods face a fundamental limitation: few provide formal privacy guarantees as a built-in mechanism with regards to preserving utility. Privacy leakage occurs because clients transmit trained model parameters, generator weights, or distilled data, where information about individual training records can be leaked. Retrofitting differential privacy (DP) \cite{dwork2014algorithmic} onto these approaches is difficult; the high dimensionality of model parameters inflates sensitivity, DP noise degrades already-fragile local models, and multi-step training complicates privacy accounting. Meanwhile, training local generators and models such as in \cite{one-shot-cvae, Dense, coboosting} can also heavily increase computational cost on clients.

This leads to a direct question: \textit{Can we devise a framework which leverages the benefits of representation sharing in an OSFL setting with formal privacy guarantees and lightweight client computation?}

We observe that modern pretrained autoencoders, originally developed for neural image compression~\cite{dc-ae, balle}, provide exactly the structure needed to answer this question. These models map images to compact latent vectors and back with high fidelity, using publicly available weights that require no adaptation. The key insight is that, given such a shared public encoder, each client can encode its private images in a single forward pass and compute simple per-class statistics in the latent space. These statistics can be aggregated across clients using secure aggregation, which we implement such that partitioning data across any number of clients produces exactly the same as if data were aggregated centrally. Moreover, the latent vectors are also low-dimensional and have bounded sensitivity, making them ideal targets for calibrated Gaussian noise under differential privacy \cite{dworkDP}.

Building on this insight, we propose \textbf{FedKT-CSD} (Federated Knowledge Transfer via Collaborative Synthetic Data). Each client receives a frozen, publicly pretrained autoencoder and encodes its private images via a single forward pass with no optimization or training on-device. The client computes class-conditional latent statistics, protects them under differential privacy, and transmits only these lightweight summaries to the server. The server aggregates the noisy statistics across all clients, recovers per-class Gaussian distributions in latent space, samples from them, and decodes the samples into a synthetic image dataset. This dataset can then be used to train a global classifier, a feature extractor for personalized FL, or auxiliary data for multi-round methods.

Our approach achieves three critical objectives simultaneously: (1) privacy by design, since the autoencoder is public and frozen and only DP-protected class-conditional statistics are transmitted; (2) communication efficiency, as the approach requires a single communication round where each client uploads only statistics rather than model parameters; and (3) computational efficiency, since clients perform only a forward pass through the encoder with no on-device training, optimization, or gradient computation.

Our contributions are:
\begin{itemize}
    \item \textbf{One-shot DP synthetic data generation.} We show that deep compression autoencoders provide a shared latent space in which class-conditional statistics can be computed with differential privacy. This enables a one-shot FL pipeline that generates synthetic data with formal $(\varepsilon,\delta)$-DP guarantees, a single communication round, and no on-device training.

    \item \textbf{Competitive with SOTA under DP.} Despite operating under DP, FedKT-CSD is competitive and outperforms existing one-shot FL baselines across  diverse datasets and heterogeneity settings. Moreover, our method is also invariant to both the degree of heterogeneity and the number of clients, properties that no existing OSFL method possesses.

    \item \textbf{Versatile downstream use.} The DP synthetic dataset is not limited to server-side training. We demonstrate its effectiveness for downstream personalized FL, where it serves as pretraining data and achieves improvements over multi-round personalized FL baselines.

    \item \textbf{Practical efficiency.} Each client uploads payloads in the regime of Kilobytes and requires seconds of computation with no GPU training. This makes FedKT-CSD deployable in bandwidth- and compute-constrained settings where existing methods are impractical.
\end{itemize}

\begin{figure}
    \centering
    \includegraphics[width=1.0\linewidth]{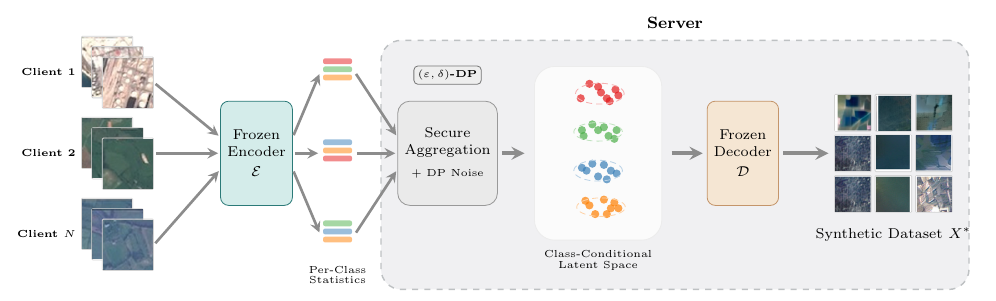}
    \caption{Overview of FedKT-CSD. Each client holds private images from a heterogeneous 
subset of classes. A shared frozen encoder maps images to compact latent vectors, from 
which per-class statistics (colored bars) are computed locally. These lightweight 
summaries are transmitted via secure aggregation and protected with calibrated DP noise 
on the server. The server recovers class-conditional Gaussian distributions in the 
latent space (colored clusters), samples new latent vectors, and decodes them into a 
synthetic dataset $X^*$ via the frozen decoder.}
    \label{fig:Overview}
\end{figure}
 
\section{Related Works and Preliminaries}

\textbf{One-Shot Federated Learning.} One-shot FL methods can be organized by how clients communicate with the server. One family transmits local models or generators, from which the server extracts knowledge via distillation. DENSE~\cite{Dense} and Co-Boosting~\cite{coboosting} follow this approach, aggregating client model ensembles through data-free knowledge distillation. Both evaluate privacy empirically through membership inference but provide no formal guarantees. A second family transmits data, or data-derived features, ranging from distilled datasets, synthetic samples, or parametric distributions. FedD3~\cite{FedD3} constructs compact client summaries via dataset distillation. FedSD2C~\cite{FedSD2C} optimizes synthetic distillates through a pretrained autoencoder to minimize information loss relative to the original data. Similarly, \cite{one-shot-cvae} shares locally trained generators to avoid model aggregation, though without privacy analysis. Most closely related to our work is FedPFT~\cite{fedpft}, which also encodes images through a frozen pretrained model and transmits class-conditional Gaussian mixtures in a single round. Our approach differs in that we aggregate additive statistics rather than per-client mixture models, enabling secure aggregation, exact partition invariance, and cleaner DP accounting. In the FedPFT privacy model, noise is added per-client to high-dimensional GMM parameters, which does not benefit from secure aggregation and is likely to degrade at meaningful privacy budgets. Moreover, we use an autoencoder rather than a discriminative feature extractor, producing a reusable pixel-space synthetic dataset rather than features tied to a specific classifier head, which is more useful for downstream tasks. We discuss the precise differences further in the appendix.

\textbf{Data and Representation Sharing.} Underpinning the recent works in OSFL are works on data sharing. To improve transferability, recent work explores sharing auxiliary information beyond parameters. FedGen~\cite{FedGen} and FedFTG~\cite{FedFTG} use knowledge distillation from client logits to train server-side generators or refine global models. CCVR~\cite{CCVR} and FedFed~\cite{FedFed} share representation statistics to calibrate classifiers or align features. In addition, \cite{fedxds} use data derived from XAI-attribution methods, and \cite{diffusion_federated} uses diffusion models to account for classifier drift. More recently, foundation models enable high-quality synthetic data generation: Pre-Text~\cite{Pre-Text}, dp-prompt~\cite{dp-prompt}, and dp-kde~\cite{dp-kde} leverage LLMs in the text domain. These approaches typically require large pretrained models and may incur significant computational or privacy costs.

\section{Methodology}
\begin{algorithm}[t]
\caption{FedKT-CSD Algorithm}
\label{alg:fedktcsd}
\textbf{Input:} Frozen encoder $\mathcal{E}$, decoder $\mathcal{D}$; $N$ clients with labeled data; privacy budget $(\varepsilon, \delta)$; clipping radius $R$; synthetic samples per class $N_c$ \\[4pt]
\textbf{Output:} DP synthetic dataset $X^*$

\vspace{4pt}
\hrule
\vspace{4pt}
\textbf{Client side} (each client $i$, in parallel, no training)
\vspace{2pt}
\begin{enumerate}[leftmargin=1.5em, itemsep=1pt, label=\arabic*.]
    \item \textbf{for} each image $(x_{i,j}, y_{i,j}) \in \mathcal{D}_i$ \textbf{do}
    \begin{enumerate}[leftmargin=1.5em, itemsep=1pt, label=\alph*.]
        \item Encode: $z_{i,j} \gets \mathcal{E}(x_{i,j})$
        \item Clip to bounded norm: $\bar{z}_{i,j} \gets \mathrm{Clip}_R(z_{i,j})$
    \end{enumerate}
    \item \textbf{for} each class $c$ present in client $i$'s data \textbf{do}
    \begin{enumerate}[leftmargin=1.5em, itemsep=1pt, label=\alph*.]
        \item Sum of latents: $\mathbf{m}_c^i \gets \sum_j \bar{z}_{i,j}$
        \item Sum of outer products: $\mathbf{M}_c^i \gets \sum_j \bar{z}_{i,j}\bar{z}_{i,j}^\top$
        \item Sample count: $n_c^i \gets$ number of class-$c$ images
    \end{enumerate}
    \item Upload $\{(\mathbf{m}_c^i, \mathbf{M}_c^i, n_c^i)\}_c$ via secure aggregation
\end{enumerate}

\vspace{2pt}
\hrule
\vspace{4pt}
\textbf{Server side} (aggregate, add noise, generate)
\vspace{2pt}
\begin{enumerate}[leftmargin=1.5em, itemsep=1pt, label=\arabic*., start=5]
    \item Aggregate across clients: $\mathbf{S}_{\mu,c} \gets \sum_i \mathbf{m}_c^i$, \; $\mathbf{S}_{\Sigma,c} \gets \sum_i \mathbf{M}_c^i$, \; $n_c \gets \sum_i n_c^i$
    \item Add calibrated Gaussian noise to each sum (Prop.~\ref{prop:dp_guarantee})
    \item \textbf{for} each class $c = 1, \dots, K$ \textbf{do}
    \begin{enumerate}[leftmargin=1.5em, itemsep=1pt, label=\alph*.]
        \item Recover noisy mean: $\boldsymbol{\mu}_c \gets$ noisy latent sum $/ \, n_c$
        \item Recover covariance: second moment $/ \, n_c$ minus outer product of mean
        \item Apply bias correction and PSD projection
        \item Sample $N_c$ latents from $\mathcal{N}(\boldsymbol{\mu}_c, \boldsymbol{\Sigma}_c)$
        \item Decode to images: $x_c^{(k)} \gets \mathcal{D}(z_c^{(k)})$
    \end{enumerate}
    \item \textbf{Return} $X^* = \{(x_c^{(k)}, c)\}$ for downstream training
\end{enumerate}
\end{algorithm}

\label{sec:method}

\subsection{Problem Setting}

We consider one-shot federated learning (OSFL). $N$ clients hold private datasets $\mathcal{D}_i = \{(x_{i,j}, y_{i,j})\}_{j=1}^{n_i}$ with $y_{i,j} \in \{1,\dots,K\}$, drawn from heterogeneous distributions. The goal is to produce a global model $f_\theta$ minimizing
\begin{equation}
\label{eq:objective}
\min_\theta \; \sum_{i=1}^N \frac{n_i}{\sum_j n_j}\,\mathcal{L}_{\mathcal{D}_i}(\theta),
\end{equation}
subject to (i)~a \textbf{single communication round} and (ii)~\textbf{record-level differential privacy}. Our key observation is that statistics of latent representations are additive: class-conditional sums computed locally combine across clients without information loss, so each client transmits only lightweight summary statistics from which the server generates a DP synthetic dataset $X^*$ (\autoref{fig:Overview}). We follow the standard conditional-generation setting where class labels are treated as public information. The privacy guarantee therefore protects the private feature content of each record, not its class membership although we outline how to achieve this in the appendix.

\subsection{Pretrained Autoencoder and Encoding}
\label{sec:autoencoder}

We use a publicly pretrained autoencoder with encoder $\mathcal{E}\colon \mathbb{R}^{H \times W \times C} \to \mathbb{R}^d$ and decoder $\mathcal{D}\colon \mathbb{R}^d \to \mathbb{R}^{H \times W \times C}$, both frozen throughout.

Each client $i$ encodes its images via $z_{i,j} = \mathcal{E}(x_{i,j})$, a single forward pass with no optimization or training on-device. After clipping each latent to $\ell_2$-norm at most $R$ (Section~\ref{sec:clipping}), the client groups its samples by class label. Let $\mathcal{I}_{i,c} = \{j : y_{i,j} = c\}$ denote the index set of client $i$'s samples belonging to class $c$. For each class it holds, the client computes three quantities:
\begin{equation}
\label{eq:client_stats}
\mathbf{m}_c^i = \sum_{j \in \mathcal{I}_{i,c}} \bar{z}_{i,j}, \qquad
\mathbf{M}_c^i = \sum_{j \in \mathcal{I}_{i,c}} \bar{z}_{i,j}\,\bar{z}_{i,j}^\top, \qquad
n_c^i = |\mathcal{I}_{i,c}|,
\end{equation}
where $\bar{z}_{i,j} = \mathrm{Clip}_R(z_{i,j})$ is the clipped latent vector. Here $\mathbf{m}_c^i \in \mathbb{R}^d$ is the sum of clipped latents for class $c$ (not the mean), $\mathbf{M}_c^i \in \mathbb{R}^{d \times d}$ is the sum of their outer products, and $n_c^i$ is the number of class-$c$ samples held by client $i$. The triple $(\mathbf{m}_c^i, \mathbf{M}_c^i, n_c^i)$ is all that leaves the client. We transmit sums rather than means because sums are additive: they can be aggregated across clients without knowing per-client sample counts, compose naturally with secure aggregation (Section~\ref{sec:secagg}), and have bounded per-record sensitivity for DP.

\subsection{Aggregation and Noise Injection}
\label{sec:secagg}

The server collects the per-client triples and aggregates them via secure aggregation, a cryptographic protocol that computes element-wise sums across clients while revealing only the totals to the server, never individual contributions (we describe secure aggregation further in the appendix). The resulting global quantities are:
\begin{equation}
\label{eq:global_sums}
\mathbf{S}_{\mu,c} = \sum_{i=1}^N \mathbf{m}_c^i \in \mathbb{R}^d, \qquad
\mathbf{S}_{\Sigma,c} = \sum_{i=1}^N \mathbf{M}_c^i \in \mathbb{R}^{d \times d}, \qquad
n_c = \sum_{i=1}^N n_c^i.
\end{equation}
Because these are simple sums, the result is identical to what would be computed if all class-$c$ data resided on a single machine; the partitioning across clients has no effect. Gaussian noise calibrated to the sensitivity bounds (Proposition~\ref{prop:dp_guarantee}) is then added to the aggregated sums:
\begin{equation}
\label{eq:noisy_sums}
\widetilde{\mathbf{S}}_{\mu,c} = \mathbf{S}_{\mu,c} + \eta_\mu, \qquad
\widetilde{\mathbf{S}}_{\Sigma,c} = \mathbf{S}_{\Sigma,c} + \eta_\Sigma,
\end{equation}
where $\eta_\mu \sim \mathcal{N}(0,\sigma_\mu^2 I_d)$ and $\eta_\Sigma \sim \mathcal{N}(0,\sigma_\Sigma^2 I_{d \times d})$. After this step, all subsequent operations (normalization, sampling, decoding, model training) are deterministic functions of the noisy sums and incur no additional privacy cost.

\subsection{Post-Processing}
\label{sec:post_processing}

Given the noisy sums $(\widetilde{\mathbf{S}}_{\mu,c}, \widetilde{\mathbf{S}}_{\Sigma,c}, n_c)$, the server recovers class-conditional statistics. The DP-protected mean $\boldsymbol{\mu}_c^{\mathrm{DP}} \in \mathbb{R}^d$ and raw covariance estimate $\widehat{\boldsymbol{\Sigma}}_c \in \mathbb{R}^{d \times d}$ are:
\begin{equation}
\label{eq:dp_mean}
\boldsymbol{\mu}_c^{\mathrm{DP}} = \frac{\widetilde{\mathbf{S}}_{\mu,c}}{n_c}, \qquad
\widehat{\boldsymbol{\Sigma}}_c 
  = \frac{\widetilde{\mathbf{S}}_{\Sigma,c}}{n_c} 
    - \boldsymbol{\mu}_c^{\mathrm{DP}}\,(\boldsymbol{\mu}_c^{\mathrm{DP}})^\top.
\end{equation}
The first term $\widetilde{\mathbf{S}}_{\Sigma,c}/n_c$ is the noisy second moment; subtracting the outer product of the mean yields the covariance. Two corrections are needed before $\widehat{\boldsymbol{\Sigma}}_c$ can be used for sampling:

\paragraph{Bias correction.} Because $\boldsymbol{\mu}_c^{\mathrm{DP}}$ is itself noisy, the subtracted outer product is biased upward: $\mathbb{E}[\boldsymbol{\mu}_c^{\mathrm{DP}}(\boldsymbol{\mu}_c^{\mathrm{DP}})^\top] = \boldsymbol{\mu}_c\boldsymbol{\mu}_c^\top + (\sigma_\mu/n_c)^2 I_d$, where the second term is the variance of the noise in the mean. This increases the subtracted quantity, \emph{decreasing} the covariance estimate along the diagonal. We correct analytically:
\begin{equation}
\label{eq:bias_correction}
\widehat{\boldsymbol{\Sigma}}_c \;\leftarrow\;
  \widehat{\boldsymbol{\Sigma}}_c + \frac{\sigma_\mu^2}{n_c^2}\,I_d.
\end{equation}

\paragraph{PSD projection.} The additive noise on $\widetilde{\mathbf{S}}_{\Sigma,c}$ can result in $\widehat{\boldsymbol{\Sigma}}_c$ being indefinite (some eigenvalues negative), which would prevent sampling. We project onto the positive semidefinite cone and compute the eigendecomposition $\widehat{\boldsymbol{\Sigma}}_c = Q\Lambda Q^\top$ (with $Q$ orthogonal and $\Lambda$ diagonal) and clamp all eigenvalues to a minimum threshold $\tau > 0$:
\begin{equation}
\label{eq:psd_projection}
\boldsymbol{\Sigma}_c^{\mathrm{DP}} = Q\,\max(\Lambda,\,\tau I)\,Q^\top,
\end{equation}
where $\tau$ is small (e.g., $10^{-6}$). The result $\boldsymbol{\Sigma}_c^{\mathrm{DP}}$ is the final DP-protected covariance used for generation. Both corrections are deterministic post-processing of the noisy sums and incur no additional privacy cost.

\subsection{Synthetic Data Generation and Downstream Use}
\label{sec:synth}

The server now possesses, for each class $c$, a DP-protected mean $\boldsymbol{\mu}_c^{\mathrm{DP}} \in \mathbb{R}^d$ and covariance $\boldsymbol{\Sigma}_c^{\mathrm{DP}} \in \mathbb{R}^{d \times d}$ that together parameterize a class-conditional Gaussian in the autoencoder's latent space. For each class $c$, $N_c$ latent vectors are sampled from this Gaussian and decoded back to pixel space via the frozen decoder $\mathcal{D}$:
\begin{equation}
\label{eq:sampling}
z_c^{(k)} \sim \mathcal{N}\!\big(\boldsymbol{\mu}_c^{\mathrm{DP}},\,\boldsymbol{\Sigma}_c^{\mathrm{DP}}\big), \qquad
x_c^{(k)} = \mathcal{D}(z_c^{(k)}), \qquad k = 1,\dots,N_c.
\end{equation}
The synthetic dataset $X^* = \bigcup_{c=1}^K \{(x_c^{(k)}, c)\}_{k=1}^{N_c}$ aggregates decoded images across all $K$ classes. Since $X^*$ is derived entirely from the DP-protected statistics, it can be generated in any quantity, producing more samples is just additional sampling and decoding, at no additional privacy cost.

In our experiments we train a neural network on $X^*$ via standard ERM:
\begin{equation}
\label{eq:downstream}
\theta^* = \arg\min_\theta \; \frac{1}{|X^*|} \sum_{(x,y) \in X^*} \ell\!\big(f_\theta(x),\, y\big).
\end{equation}
$X^*$ is equally suitable as pretraining data for multi-round FL, a distillation corpus, or input to any other learning pipeline. We refer do \autoref{fig:Overview} for a visualization of our method.

% =============================================================================
\section{Privacy Analysis}
\label{sec:privacy}

The DP analysis is straightforward compared to iterative FL methods. Statistics are released exactly once, sensitivity is determined by a public clipping threshold, and all downstream operations are post-processing. This DP framework does not protect class membership and we assume the label space is public knowledge. We show how to privatize class membership in the supplementary materials.

\subsection{Clipping}
\label{sec:clipping}

Before computing statistics, each client clips its encoded latents to $\ell_2$-norm to $R$:
\begin{equation}
\label{eq:clipping}
\bar{z}_{i,j} = z_{i,j} \cdot \min\!\Big(1,\;\frac{R}{\|z_{i,j}\|_2}\Big).
\end{equation}
Clipping ensures that each individual record's contribution to the sums $\mathbf{S}_{\mu,c}$ and $\mathbf{S}_{\Sigma,c}$ is bounded, which is necessary for calibrating the DP noise. The threshold $R$ is set data-independently from the autoencoder prior: under $p(z) = \mathcal{N}(0, I_d)$, the norm $\|z\|_2$ concentrates near $\sqrt{d}$, so $R = \alpha\sqrt{d}$ with a moderate $\alpha$ (e.g., $\alpha = 3$) covers the vast majority of encoded latents without inspecting any private data.

\subsection{Sensitivity}
\label{sec:sensitivity}

Under the public-label assumption above, we adopt \emph{within-class replacement} adjacency. Adjacent datasets $\mathcal{D} \sim \mathcal{D}'$ differ by replacing one class-$c$ record with another class-$c$ record, leaving all counts $n_c$ unchanged. Under this definition, replacing a clipped latent $\bar{z}$ (with $\|\bar{z}\|_2 \le R$) by $\bar{z}'$ (with $\|\bar{z}'\|_2 \le R$) changes the sum $\mathbf{S}_{\mu,c}$ by $\bar{z}' - \bar{z}$ and the second-moment sum $\mathbf{S}_{\Sigma,c}$ by $\bar{z}'(\bar{z}')^\top - \bar{z}\bar{z}^\top$. The $\ell_2$-sensitivities are therefore:
\begin{equation}
\label{eq:sensitivity}
\Delta_\mu = \sup_{\mathcal{D} \sim \mathcal{D}'} \|\mathbf{S}_{\mu,c}(\mathcal{D}) - \mathbf{S}_{\mu,c}(\mathcal{D}')\|_2 \le 2R, \qquad
\Delta_\Sigma = \sup_{\mathcal{D} \sim \mathcal{D}'} \|\mathbf{S}_{\Sigma,c}(\mathcal{D}) - \mathbf{S}_{\Sigma,c}(\mathcal{D}')\|_F \le 2R^2,
\end{equation}
by the triangle inequality and the norm bound on clipped latents.

\subsection{Privacy Guarantee}

The mechanism results in two quantities per class: the noisy mean sum $\widetilde{\mathbf{S}}_{\mu,c}$ and the noisy second-moment sum $\widetilde{\mathbf{S}}_{\Sigma,c}$, each protected by the Gaussian mechanism. We split the total privacy budget $(\varepsilon, \delta)$ equally between them. We use the \emph{analytic Gaussian mechanism}~\cite{analytic_gaussian} rather than the textbook closed-form calibration because it provides a valid and tighter calibration for all $\varepsilon > 0$, including our experimental regime.

For a query $q(\mathcal{D})$ with $\ell_2$-sensitivity $\Delta$, we release
\begin{equation}
\label{eq:gaussian_release}
\widetilde{q}(\mathcal{D}) = q(\mathcal{D}) + \xi, \qquad \xi \sim \mathcal{N}(0, \sigma^2 I),
\end{equation}
where $\sigma$ is calibrated by the analytic Gaussian mechanism. In our case, this gives
\begin{equation}
\widetilde{\mathbf{S}}_{\mu,c} = \mathbf{S}_{\mu,c} + \eta_{\mu,c}, \qquad
\widetilde{\mathbf{S}}_{\Sigma,c} = \mathbf{S}_{\Sigma,c} + \eta_{\Sigma,c},
\end{equation}
where $\eta_{\mu,c} \sim \mathcal{N}(0,\sigma_\mu^2 I_d)$ and, viewing $\mathbf{S}_{\Sigma,c} \in \mathbb{R}^{d \times d}$ as a vector in $\mathbb{R}^{d^2}$, $\eta_{\Sigma,c}$ is isotropic Gaussian noise in that ambient space with variance $\sigma_\Sigma^2$ per coordinate.

\begin{proposition}[FedKT-CSD Differential Privacy]
\label{prop:dp_guarantee}
Under the notation above, let $\varepsilon' = \varepsilon/2$ and $\delta' = \delta/2$. Set the noise scales $\sigma_\mu$ and $\sigma_\Sigma$ to be the smallest positive values such that
\begin{equation}
\label{eq:agm_calibration}
\Phi\!\left(\frac{\Delta}{2\sigma} - \frac{\varepsilon' \sigma}{\Delta}\right)
-
e^{\varepsilon'}
\Phi\!\left(-\frac{\Delta}{2\sigma} - \frac{\varepsilon' \sigma}{\Delta}\right)
\le \delta',
\end{equation}
where $\Phi$ is the standard normal CDF, with $\Delta=\Delta_\mu$ for $\sigma_\mu$ and $\Delta=\Delta_\Sigma$ for $\sigma_\Sigma$. Then, for a fixed class $c$, the joint disclosure of $(\widetilde{\mathbf{S}}_{\mu,c},\,\widetilde{\mathbf{S}}_{\Sigma,c})$ satisfies $(\varepsilon,\delta)$-differential privacy at the record level.
\end{proposition}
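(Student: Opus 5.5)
The argument has three parts: the sensitivity bounds, the analytic Gaussian mechanism applied to each released sum, and basic composition of the two releases.

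First, fix a class $c$ and work under within-class replacement adjacency. Then $n_c$ is identical on adjacent datasets and can be treated as public. By \eqref{eq:sensitivity}, $\|\mathbf{S}_{\mu,c}(\mathcal{D})-\mathbf{S}_{\mu,c}(\mathcal{D}')\|_2\le 2R=\Delta_\mu$. Since the Frobenius norm of a $d\times d$ matrix equals the $\ell_2$ norm of its vectorization in $\mathbb{R}^{d^2}$, we also get $\|\mathrm{vec}(\mathbf{S}_{\Sigma,c}(\mathcal{D}))-\mathrm{vec}(\mathbf{S}_{\Sigma,c}(\mathcal{D}'))\|_2\le 2R^2=\Delta_\Sigma$. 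The bound on the outer-product difference follows from the triangle inequality together with $\|\bar z\bar z^\top\|_F=\|\bar z\|_2^2\le R^2$. Secure aggregation only ever reveals these sums, so with respect to the server the queries are exactly $\mathbf{S}_{\mu,c}$ and $\mathrm{vec}(\mathbf{S}_{\Sigma,c})$.

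Second, invoke the analytic Gaussian mechanism theorem of \cite{analytic_gaussian}. For a vector query with $\ell_2$-sensitivity $\Delta$, isotropic noise $\mathcal{N}(0,\sigma^2 I)$ gives $(\varepsilon',\delta')$-DP if and only if \eqref{eq:agm_calibration} holds. The reason is that the privacy loss between $\mathcal{N}(q(\mathcal{D}),\sigma^2 I)$ and $\mathcal{N}(q(\mathcal{D}'),\sigma^2 I)$ reduces to a one-dimensional problem along the direction $q(\mathcal{D})-q(\mathcal{D}')$. The left-hand side of \eqref{eq:agm_calibration} is the exact hockey-stick divergence for the worst-case shift $\|q(\mathcal{D})-q(\mathcal{D}')\|_2=\Delta$, and it is monotone in the shift, so smaller shifts only help.

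One point needs checking here: "the smallest positive $\sigma$" must actually satisfy the inequality. I would argue that the left-hand side is continuous and decreasing in $\sigma$, so the feasible set is a closed ray $[\sigma^*,\infty)$ and its minimum is attained. Monotonicity in $\sigma$ is part of the result in \cite{analytic_gaussian}, and I would cite it rather than reprove it. Applying this twice, with $\Delta_\mu$ and with $\Delta_\Sigma$ in dimension $d^2$, gives that $\widetilde{\mathbf{S}}_{\mu,c}$ and $\widetilde{\mathbf{S}}_{\Sigma,c}$ are each $(\varepsilon/2,\delta/2)$-DP. The dimension never enters the calibration.

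Third, the two noise draws are independent, so the joint release is an adaptive composition of two mechanisms, even though here it is non-adaptive. By the basic composition theorem \cite{dwork2014algorithmic}, the pair is $(\varepsilon'+\varepsilon',\delta'+\delta')=(\varepsilon,\delta)$-DP. Everything downstream (the divisions by the public $n_c$, bias correction, PSD projection, sampling, decoding, training) is post-processing and inherits the guarantee, though the proposition itself only concerns the pair of noisy sums.

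I expect the main obstacle to be the modeling step rather than the analysis. The statement is "record-level," but the sensitivity bound must hold for replacement of a record in any client's dataset. The argument also relies on the counts $n_c$, which are released unnoised, being invariant under the adjacency relation. I would state explicitly that replacing one record changes exactly one summand $\bar z_{i,j}$ inside one client's $\mathbf{m}_c^i,\mathbf{M}_c^i$, and therefore exactly one term of the global sums. I would also note that the guarantee is for a fixed class, and that across classes parallel composition applies because records are disjoint by class.
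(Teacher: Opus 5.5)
Your proposal is correct and follows the same route as the paper's proof. You bound the sensitivities by $2R$ and $2R^2$, apply the analytic Gaussian mechanism to get $(\varepsilon/2,\delta/2)$-DP for each noisy sum, and use basic composition to reach $(\varepsilon,\delta)$. The extra points you check are details the paper leaves implicit, not a different argument. These are: the Frobenius bound as an $\ell_2$ bound on the vectorization; attainment of the minimal $\sigma$; monotonicity in the shift, which lets the upper bounds $2R$ and $2R^2$ stand in for the true sensitivities; and the invariance of $n_c$ under within-class replacement.
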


\begin{proof}
By the analytic Gaussian mechanism~\cite{analytic_gaussian}, a query with $\ell_2$-sensitivity $\Delta$ satisfies $(\varepsilon',\delta')$-DP when Gaussian noise with standard deviation $\sigma$ is calibrated according to Eq.~\eqref{eq:agm_calibration}. Applying this separately to the two queries with sensitivities $\Delta_\mu = 2R$ and $\Delta_\Sigma = 2R^2$ yields $(\varepsilon/2,\delta/2)$-DP for each release. By basic composition, the joint release satisfies $(\varepsilon/2{+}\varepsilon/2,\;\delta/2{+}\delta/2) = (\varepsilon,\delta)$-DP.
\end{proof}

All subsequent operations such as normalization (Eq.~\ref{eq:dp_mean}), bias correction (Eq.~\ref{eq:bias_correction}), PSD projection (Eq.~\ref{eq:psd_projection}), sampling, decoding, and model training are deterministic functions of the released noisy sums and preserve $(\varepsilon,\delta)$-DP by the post-processing theorem~\citep{dwork2014algorithmic}. The effective noise in the mean scales as $\sigma_\mu/n_c$, so larger per-class counts directly improve utility. Within-class replacement protects the \emph{values} of individual records (i.e., which specific image a client contributes to class $c$) but does not protect class membership. The alternative add/remove adjacency, which does hide class membership, would reduce sensitivity to $R$ (resp.\ $R^2$) since only one term is added or removed rather than swapped, but requires additional care in handling the variable count $n_c$. However, class membership can be protected via secure aggregation and shuffling methods; see the appendix and \cite{class_shuffling}. In addition, we discuss making the class count DP in the appendix.

\section{Experimental Results}
\label{sec:results}

\textbf{Datasets.}
We evaluate on four image classification benchmarks chosen to cover diverse visual domains: ImageNette (10 classes, 9.5k ImageNet subset) and CIFAR-100 (100 classes, 50k natural images) as standard benchmarks for image classification, and EuroSAT (10 classes, 27k satellite images), and BloodMNIST (8 classes, 12k microscopy images) since these constitute real-world and out-of-distribution datasets for the pretrained VAE encoder-decoder pair. This selection tests whether the pretrained autoencoder, originally trained on natural images, generalizes to other domains. 

\textbf{Heterogeneity simulation.}
We follow \cite{pflib} for experimental setups. The \emph{practical} setting draws per-client label distributions from a Dirichlet distribution with concentration parameter $\alpha$ where smaller $\alpha$ means more heterogeneity. The \emph{pathological} setting assigns each client a disjoint subset of classes (2 classes per client for 10- and 8-class datasets; 10 for CIFAR-100). Together these cover the spectrum from moderate label imbalance to complete class separation.

\textbf{Implementation.}
We follow \cite{pflib} in their implementation. Default client number is 20 clients with full participation. The autoencoder is DC-AE f32c32~\citep{dc-ae}. Privacy parameters are $\varepsilon{=}10$, $\delta{=}10^{-5}$ unless stated otherwise. Downstream classifier is ResNet-18 trained on synthetic data only and evaluated on real test data of the corresponding dataset. All OSFL baselines use a ResNet-18 unless otherwise specified.

% ─────────────────────────────────────────────────────────────────
\subsection{One-Shot FL Performance Comparison}
\label{sec:osfl_comparison}

Tables~\ref{tab:osfl_20c} compare FedKT-CSD against five one-shot FL baselines (FedSD2C, DENSE, CoBoosting, FedD3, and FedCVAE) across four datasets and three heterogeneity settings. All baselines operate without differential privacy, while FedKT-CSD satisfies $(\varepsilon{=}10, \delta{=}10^{-5})$-DP. Despite this stricter threat model, FedKT-CSD outperforms every baseline on every dataset and heterogeneity setting.

The reason lies in how each method handles the heterogeneity-communication tradeoff. The baselines train local models (generators, classifiers, or distilled representations) on each client's skewed data, then aggregate these models on the server. When data is heterogeneous, local models are biased toward the client's label subset, and single-round aggregation cannot fully correct this bias. This effect is visible in the tables. Every baseline degrades by 20--40\% between $\alpha{=}0.1$ and the pathological setting, because pathological splits produce heavily biased local models.

FedKT-CSD sidesteps this entirely. Because we aggregate per-class statistics rather than model parameters, the server reconstructs the same global class-conditional distributions regardless of how labels are distributed across clients. The three heterogeneity columns within each dataset are identical up to DP noise, a property that no model-aggregation approach can match. This heterogeneity invariance is a structural guarantee of the additive aggregation scheme.

Furthermore, the performance gap is widest on datasets where baseline local models are most fragile. On CIFAR-100, where pathological splits give each client only 10 of 100 classes, baselines collapse below 21\% while FedKT-CSD maintains $\sim$29\%. On EuroSAT and BloodMNIST, which have fewer classes and more samples per class, baselines fare better but FedKT-CSD still leads by 7--15 points. The pattern is consistent: the harder the heterogeneity makes local model training, the larger our advantage.

\begin{table*}[t]
\centering
\caption{Comparison of OSFL methods. Path. = pathological split (2 classes/client for 10- and 8-class datasets; 10 classes/client for CIFAR-100). Best in \textbf{bold}.}
\label{tab:osfl_20c}
\resizebox{\textwidth}{!}{%
\begin{tabular}{ll ccc ccc ccc ccc}
\toprule
& & \multicolumn{3}{c}{\textbf{EuroSAT} (10\,cls, 27k)} 
  & \multicolumn{3}{c}{\textbf{CIFAR-100} (100\,cls, 50k)} 
  & \multicolumn{3}{c}{\textbf{ImageNette} (10\,cls, 9.5k)}
  & \multicolumn{3}{c}{\textbf{BloodMNIST} (8\,cls, 12k)} \\
\cmidrule(lr){3-5}\cmidrule(lr){6-8}\cmidrule(lr){9-11}\cmidrule(lr){12-14}
& \textbf{Method}
  & $\alpha{=}0.1$ & $\alpha{=}0.05$ & Path.
  & $\alpha{=}0.1$ & $\alpha{=}0.05$ & Path.
  & $\alpha{=}0.1$ & $\alpha{=}0.05$ & Path.
  & $\alpha{=}0.1$ & $\alpha{=}0.05$ & Path. \\
\midrule
& FedSD2C
  & 75.83 & 68.47 & 54.62
  & 27.19 & 23.62 & 20.99
  & 57.24 & 48.36 & 35.82
  & 69.52 & 61.83 & 65.47 \\
& DENSE
  & 62.38 & 53.71 & 40.25
  & 10.61 & 5.55 & 3.82
  & 42.17 & 31.54 & 21.63
  & 58.46 & 48.72 & 54.83 \\
& CoBoosting
  & 58.62 & 49.83 & 36.47
  & 16.92 & 15.70 & 14.95
  & 39.36 & 28.53 & 19.27
  & 54.82 & 45.63 & 51.47 \\
& FedD3
  & 42.35 & 38.72 & 31.46
  & 1.82 & 1.74 & 1.61
  & 24.18 & 20.65 & 16.83
  & 48.36 & 43.85 & 50.27 \\
& FedCVAE
  & 40.83 & 37.45 & 30.62
  & 1.64 & 1.65 & 1.55
  & 25.47 & 21.38 & 17.52
  & 50.15 & 45.28 & 51.83 \\
\midrule
& \textbf{Ours (DP)}
  & \textbf{80.96} & \textbf{80.74} & \textbf{80.46}
  & \textbf{29.71} & \textbf{28.69} & \textbf{28.62}
  & \textbf{62.23} & \textbf{61.98} & \textbf{62.46}
  & \textbf{75.03} & \textbf{74.84} & \textbf{75.19} \\
\bottomrule
\end{tabular}
}
\end{table*}

% ─────────────────────────────────────────────────────────────────

\subsection{Communication and Compute}
\label{sec:efficiency}

Table~\ref{tab:compute_cost} profiles the client-side cost of FedKT-CSD for four 
autoencoder configurations. All timings are wall-clock measurements on a single 
NVIDIA RTX 5090 processing 1000 images. The pipeline has three stages: encoding, 
DP statistics computation and sampling, and decoding.

With our default configuration (DC-AE f32c32 at $64{\times}64$), the entire pipeline 
completes in 1.33 seconds, dominated by the encoder and decoder forward passes (0.47s 
and 0.85s respectively). The statistics and sampling step is negligible at 3ms, since 
it involves only vector sums and matrix operations in $\mathbb{R}^{128}$. The 
lightweight TAESD codec at $32{\times}32$ finishes in under 0.1 seconds total, making 
it viable even on CPU-only clients, though at the cost of lower downstream accuracy 
(Table~\ref{tab:ae_ablation}).

Upload cost is the total payload a client transmits to the server. A client sends one 
mean vector ($d$ floats) and one symmetric covariance matrix ($d(d{+}1)/2$ floats) per 
class. With $d{=}128$ and 10 classes this amounts to 327.5\,KB per client — 
approximately two orders of magnitude smaller than a single FedAvg model upload (a 
ResNet-18 is $\sim$45\,MB). The high-resolution DC-AE at $128{\times}128$ incurs a 
substantially larger upload of 5.0\,MB due to the $d{=}512$ latent dimension, which 
also degrades DP utility (Table~\ref{tab:ae_ablation}), making it unattractive on both 
axes. Peak GPU memory during encoding is 3.5\,GB for DC-AE f32c32, well within the 
capacity of consumer hardware. For deployment on edge devices without a GPU, the TAESD 
configuration requires only 333\,MB of GPU memory and 21\,MB of client RAM.

\begin{table}[t]
\centering
\small
\caption{\textbf{Client-side cost} (1000 images, 10 classes, RTX\,5090).
Zero training: encode $\to$ DP stats $\to$ sample $\to$ decode.}
\label{tab:compute_cost}
\begin{tabular}{l cccc}
\toprule
& \textbf{TAESD} & \textbf{DC-AE f32c32} & \textbf{DC-AE f64c128} & \textbf{DC-AE f32c32} \\
& 32$\times$32 & 64$\times$64 & 64$\times$64 & 128$\times$128 \\
\midrule
Enc./Dec.\ params       & 1.2\,M / 1.2\,M & 323\,M / 323\,M & 677\,M / 677\,M & 323\,M / 323\,M \\
Latent dim $d$           & 64       & 128      & 128       & 512 \\
Model weights            & 9\,MB    & 1.2\,GB  & 2.6\,GB   & 1.2\,GB \\
\midrule
Encode (s)               & 0.020    & 0.474    & 0.473     & 1.616 \\
DP stats + sample (s)    & 0.009    & 0.003    & 0.002     & 0.006 \\
Decode (s)               & 0.059    & 0.850    & 0.939     & 3.433 \\
\textbf{Total (s)}       & {0.09}  & 1.33  & 1.41  & 5.06 \\
\midrule
Peak GPU (encode)        & 333\,MB  & 3.5\,GB  & 4.8\,GB   & 10.1\,GB \\
Client RAM         & 21\,MB   & 1.3\,GB  & 2.6\,GB   & 1.4\,GB \\
\midrule
\textbf{Upload} (1$\times$) & 83.8\,KB & 327.5\,KB & 327.5\,KB  & 5.0\,MB \\
\bottomrule
\end{tabular}
\end{table}

\subsection{Visualization}
\begin{figure*}[t]
\centering
\includegraphics[width=0.95\textwidth]{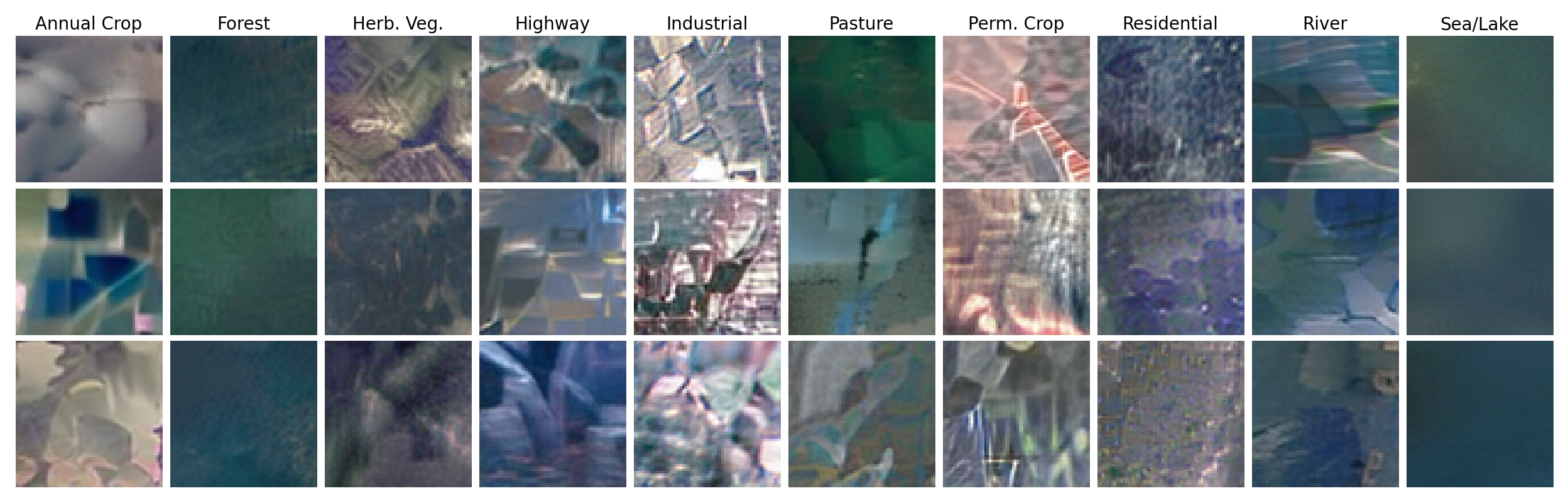}
\caption{Synthetic EuroSAT images generated by FedKT-CSD ($\varepsilon{=}10$, $\delta{=}10^{-5}$). Three random samples per class, decoded from DP-protected class-conditional Gaussians via the frozen DC-AE decoder.}
\label{fig:eurosat_samples}
\end{figure*}

Figure~\ref{fig:eurosat_samples} shows samples of synthetic EuroSAT images generated from the DP-protected class-conditional Gaussians. The samples are not entirely photorealistic, because each class is modeled as a single Gaussian in a 128-dimensional latent space, yet the decoder produces images that capture the dominant visual modes of each class rather than fine-grained spatial detail. Moreover, class-discriminative structure is clearly visible. Forests and pastures produce uniform green and blue tones, residential areas contain grid-like patterns with visible rooftops, highways exhibit linear features, and industrial regions display angular structures. Annual and permanent crops are distinguishable by texture regularity, while sea/lake samples show consistent dark blue hues. These visual differences align with what a downstream classifier needs to separate the classes, which explains why the synthetic data achieves $\sim$80\% accuracy on EuroSAT despite the coarse appearance of individual images. The key insight is that classification does not require photorealism; it requires that the synthetic distribution preserves the inter-class separation present in the real data, which the Gaussian model in latent space achieves effectively. We show more images of other dataset in the appendix.

% ─────────────────────────────────────────────────────────────────
\subsection{Comparison with other DP Synthetic Data Methods}
\label{sec:dp_synth}

Since we generate a DP synthetic dataset, we also compare FedKT-CSD against established DP synthetic image generation methods (see ~\cite{dpimagebench} for a comprehensive overview of DP methods for images) at $\varepsilon{=}10$, in both centralized and federated settings in Table~\ref{tab:dp_synth}. For the centralized setup we run the methods using the implementations in \cite{dpimagebench}. For the federated setup we simply run each method on the local client split and share the generated data to train a model on the server. 

The kernel-based methods (DP-Kernel~\cite{dp-kernel}, DP-MERF~\cite{dp-merf}, DP-NTK~\cite{dp-ntk}) operate by matching noisy statistics of random feature embeddings between real and synthetic data. The high-dimensional matching objectives require substantial noise at practical $\varepsilon$ values. FedKT-CSD outperforms all methods in the federated settings. This is because our pipeline sidesteps iterative optimization entirely. The pretrained DC-AE encoder provides a rich, low-dimensional embedding ($d{=}128$) in which simple Gaussian modeling already captures class structure well.

Furthermore, DP-LoRA LDM~\cite{dp-lora} takes a different approach, fine-tuning a latent diffusion model pretrained on the full ImageNet dataset using DP-SGD. In the centralized setting this is the strongest competitor, since the diffusion model can generate high-fidelity images and the full dataset provides enough gradient signal to absorb DP-SGD noise. However, the federated variant collapses on several datasets (e.g., 20\% on BloodMNIST). This is expected, as DP-SGD noise scales with the number of gradient steps and the sensitivity of each step, so when local datasets shrink in a federated split, the signal-to-noise ratio becomes prohibitive. FedKT-CSD does not suffer from this because it never performs gradient-based optimization on private data. The noise is injected once into low-dimensional additive statistics, and the total noise magnitude is independent of how data is partitioned across clients.

\begin{table}[t]
\centering
\small
\caption{\textbf{Comparison with DP synthetic data generation methods.}
All methods: $(\varepsilon{=}10,\delta{=}10^{-5})$-DP.
ResNet-18 trained on synthetic data, tested on real data.}
\label{tab:dp_synth}
\begin{tabular}{l l cccc}
\toprule
\textbf{Method} & \textbf{Setting}
  & \textbf{EuroSAT} & \textbf{CIFAR-100} & \textbf{ImageNette} & \textbf{BloodMNIST} \\
\midrule
\multicolumn{6}{l}{\emph{Centralized DP synthesis}} \\
DP-MERF       & Central & 33.70 & 3.60  & 15.81 & 49.38 \\
DP-NTK        & Central & 38.60 & 3.60  & 12.42 & 50.12 \\
DP-Kernel     & Central & 47.90 & 5.70  & 18.25 & 61.39 \\
DP-LoRA LDM   & Central & \textbf{84.60} & \textbf{35.20} & 58.37 & \textbf{80.06} \\
\midrule
\multicolumn{6}{l}{\emph{Federated DP synthesis (20 clients)}} \\
DP-Kernel     & Fed.    & 28.35 & 7.16  & 16.42 & 52.62 \\
DP-MERF       & Fed.    & 22.41 & 4.85  & 11.93 & 39.62 \\
DP-NTK        & Fed.    & 19.83 & 4.52  & 11.27 & 48.75 \\
DP-LoRA LDM   & Fed.    & 31.56 & 9.24  & 22.18 & 19.76 \\
\midrule
\multicolumn{6}{l}{\emph{Ours}} \\
TAESD              & Fed.  & 76.27 & 27.08 & 48.63 & 74.59 \\
DC-AE f32c32       & Fed.  & 80.68 & 29.34 & \textbf{61.10} & 74.78 \\
\bottomrule
\end{tabular}
\end{table}

\subsection{Downstream Tasks}
\label{sec:personalization}

A key advantage of our approach is that the synthetic dataset $X^*$ consists of actual images, not features tied to a specific model. Methods such as FedPFT~\cite{fedpft} generate synthetic representations in a fixed feature space, meaning any downstream model must use the same feature extractor. By contrast, $X^*$ is model-agnostic and can train any architecture, serve as a pretraining corpus, or be combined with local data for fine-tuning.

We demonstrate this flexibility by using $X^*$ as a pretraining resource for personalized federated learning, following the pFL-Bench protocol~\cite{pflib}. We use 20 clients with full participation and a ResNet-18 backbone. Baselines train via multi-round optimization (50 rounds). For FedKT-CSD, the server pretrains ResNet-18 on $X^*$, distributes it, and each client fine-tunes only the classification layer locally. Crucially, this ResNet-18 is entirely separate from the autoencoder that generated $X^*$, i.e., the synthetic images transfer across model families.

Table~\ref{table:pers_osfl} shows results on CIFAR-10 and CIFAR-100 under pathological and practical heterogeneity. The value of global synthetic pretraining scales with the severity of label-space fragmentation. On CIFAR-100 pathological, each client observes a narrow class slice and multi-round methods cannot recover structure that was never represented locally. Our globally aggregated statistics reconstruct the full class-conditional distribution, yielding a nearly 9-point lead. This advantage narrows on CIFAR-10 and under practical heterogeneity for the same reason. As local class overlap increases, the information deficit shrinks and local optimization alone suffices.

\begin{table}[t]
\centering
\caption{Downstream personalized FL comparison with multi-round pFL methods. All methods use ResNet-18 with 20 clients and full participation. Baselines run 50 communication rounds; FedKT-CSD uses one round (synthetic pretraining) followed by local classifier fine-tuning.}
\resizebox{0.75\textwidth}{!}{%
\begin{tabular}{l*{4}{c}}
\toprule
& \multicolumn{2}{c}{Pathological Heterogeneity} & \multicolumn{2}{c}{Practical Heterogeneity} \\
\cmidrule(lr){2-3} \cmidrule(lr){4-5}
& CIFAR-10 & CIFAR-100 & CIFAR-10 & CIFAR-100 \\
\midrule
FedAvg \cite{FedAvg}         & 86.05 & 44.88 & 79.68 & 32.77 \\
Per-FedAvg \cite{Per-FedAvg} & 90.12 & 55.98 & 87.93 & 45.38 \\
pFedMe \cite{pFedMe}         & 91.21 & 59.88 & 89.78 & 48.12 \\
FedAMP \cite{FedAMP}         & 89.85 & 63.32 & 89.14 & 49.68 \\
FedPHP \cite{FedPHP}         & 89.12 & 61.45 & 87.69 & 51.21 \\
FedFomo \cite{FedFOMO}       & 90.78 & 61.49 & 87.54 & 46.21 \\
APPLE \cite{APPLE}           & 90.02 & 64.84 & 88.57 & 52.14 \\
PartialFed \cite{PartialFed} & 88.95 & 60.41 & 87.29 & 49.85 \\
\midrule
FedKT-CSD & \textbf{92.82} & \textbf{73.44} & \textbf{88.78} & \textbf{59.89} \\
\bottomrule
\end{tabular}
}
\label{table:pers_osfl}
\end{table}

% ─────────────────────────────────────────────────────────────────

% ─────────────────────────────────────────────────────────────────

\section{Privacy Evaluation}
\label{sec:privacy_eval}

\paragraph{Membership inference attack.}
Although the formal DP guarantee  already bounds any adversary's advantage, we include a membership inference evaluation \cite{MIA} for comparability with prior OSFL methods that rely on MIA as their primary privacy metric. Figure~\ref{fig:ablation_mia} shows the ROC curve. The AUC is 0.501, indistinguishable from random guessing, which is the optimal result.

This result is naturally follows from the structure of our framework. Unlike standard MIA settings where the attacker probes a model that was directly trained on private data and may have memorized individual examples, our synthetic images are samples from a Gaussian distribution fit to aggregated statistics with DP. To detect whether a specific image participated, an attacker would need its individual contribution to remain detectable after aggregation across hundreds of samples, addition of calibrated DP noise, random sampling from the resulting distribution, and decoding through a public network. Each of these steps dilutes per-record influence, making membership inference structurally difficult.

\paragraph{Privacy-utility tradeoff.}
Figure~\ref{fig:ablation_privacy} shows downstream accuracy on ImageNette as a function of the privacy budget $\varepsilon$ (with $\delta{=}10^{-5}$). At $\varepsilon{=}1$, heavy noise reduces accuracy to roughly 25\%. Performance climbs steeply through $\varepsilon{=}3$--$5$ and reaches $\sim$62\% at our default $\varepsilon{=}10$, with diminishing returns beyond that (the non-DP centralized upper bound is $\sim$65\%). The steep gains at moderate $\varepsilon$ reflect the low dimensionality of the latent statistics. Because noise is injected into $d{=}128$ dimensional vectors rather than high-dimensional model parameters, even a modest privacy budget yields a favorable signal-to-noise ratio.

\section{Ablation of Method and Experimental Parameters}
% Figure 2: Ablation Studies
\begin{figure*}[t]
\centering

% Shared legend across top
\includegraphics[width=0.75\textwidth]{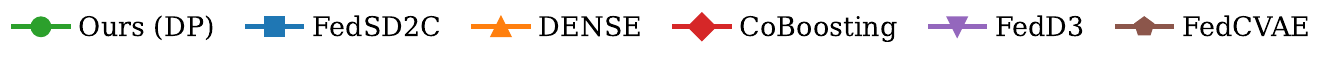}

\vspace{-0.3em}

\begin{subfigure}[b]{0.32\textwidth}
    \centering
    \includegraphics[width=\textwidth]{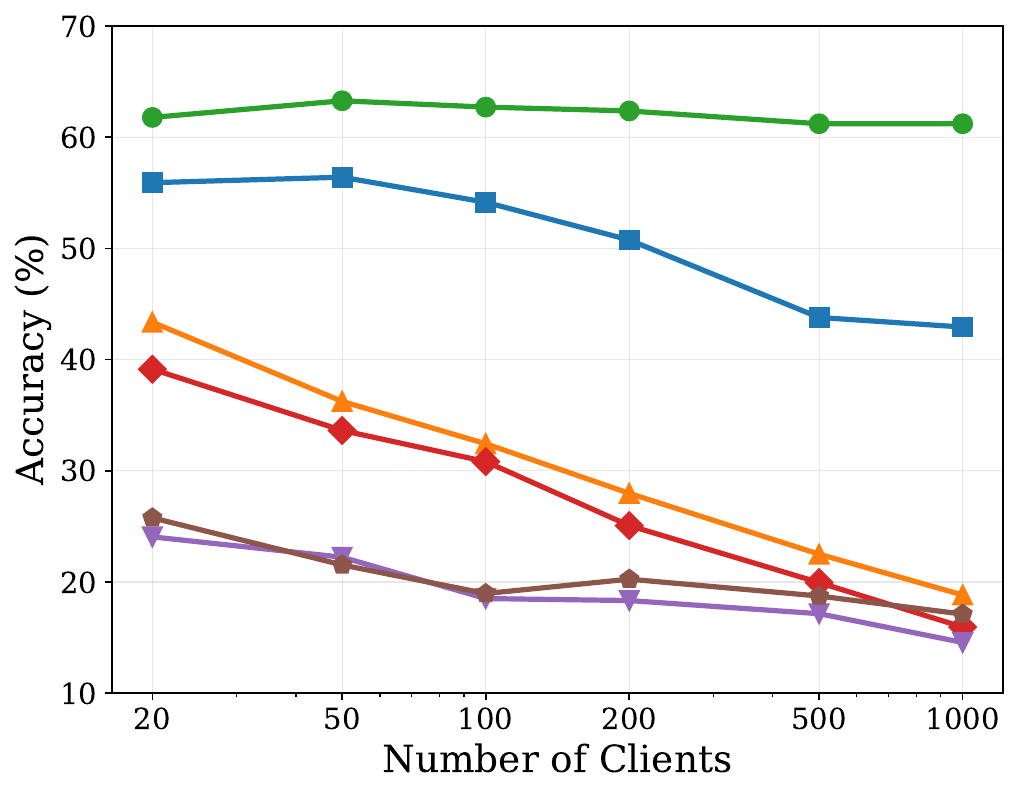}
    \caption{Number of clients}
    \label{fig:ablation_clients}
\end{subfigure}
\hfill
\begin{subfigure}[b]{0.32\textwidth}
    \centering
    \includegraphics[width=\textwidth]{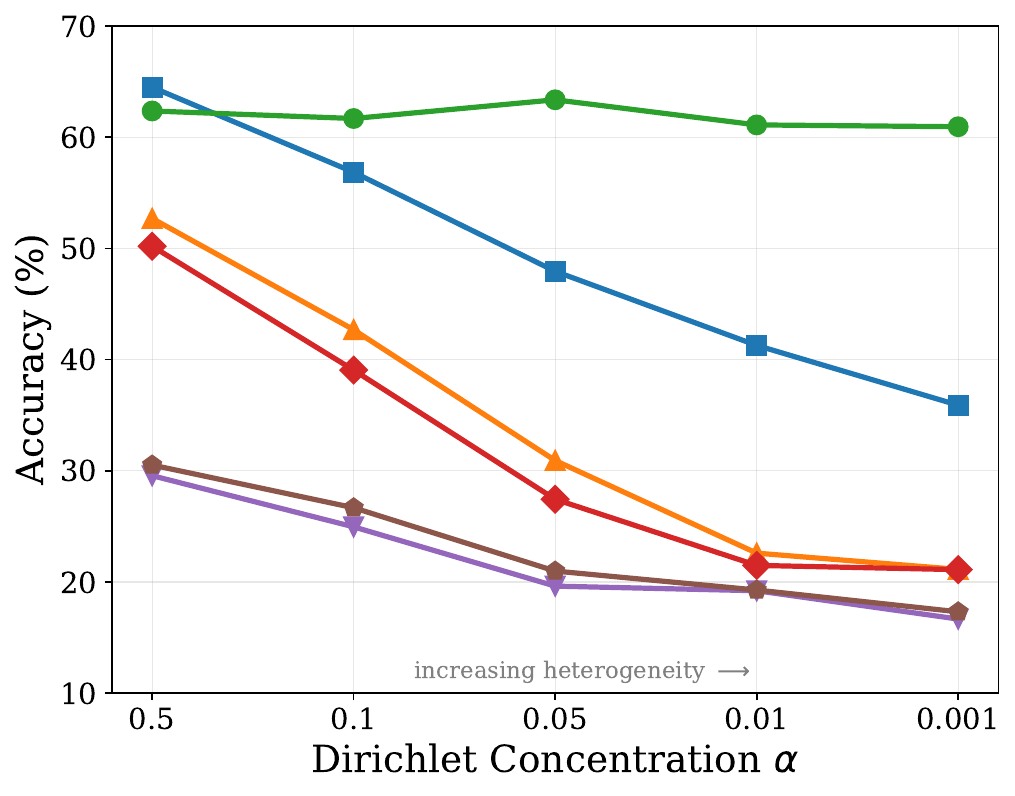}
    \caption{Heterogeneity (Dirichlet $\alpha$)}
    \label{fig:ablation_heterogeneity}
\end{subfigure}
\hfill
\begin{subfigure}[b]{0.32\textwidth}
    \centering
    \includegraphics[width=\textwidth]{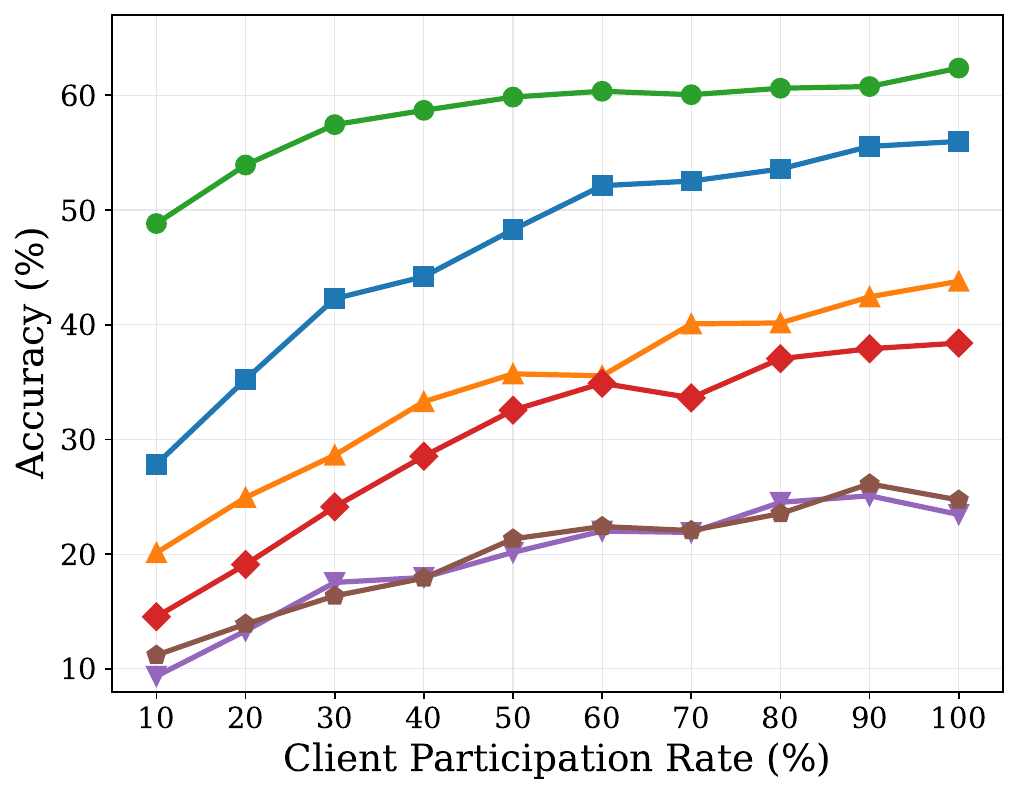}
    \caption{Client participation rate}
    \label{fig:ablation_participation}
\end{subfigure}

\vspace{0.3em}

\begin{subfigure}[b]{0.32\textwidth}
    \centering
    \includegraphics[width=\textwidth]{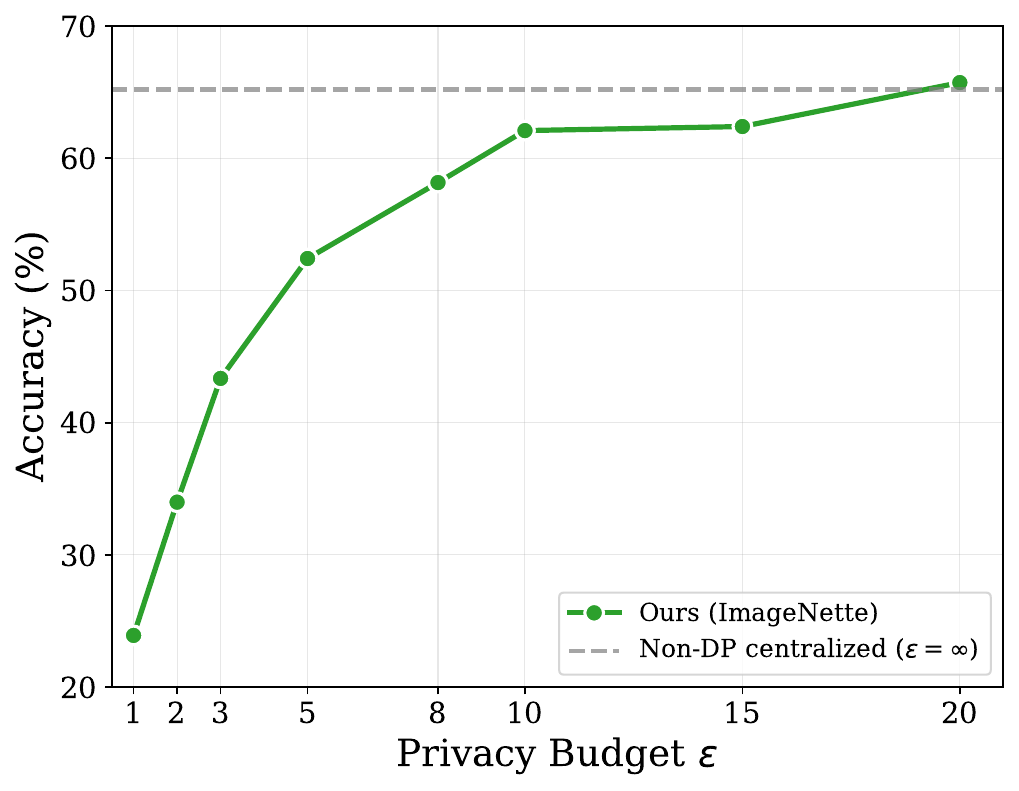}
    \caption{Privacy budget $\varepsilon$}
    \label{fig:ablation_privacy}
\end{subfigure}
\hfill
\begin{subfigure}[b]{0.32\textwidth}
    \centering
    \includegraphics[width=\textwidth]{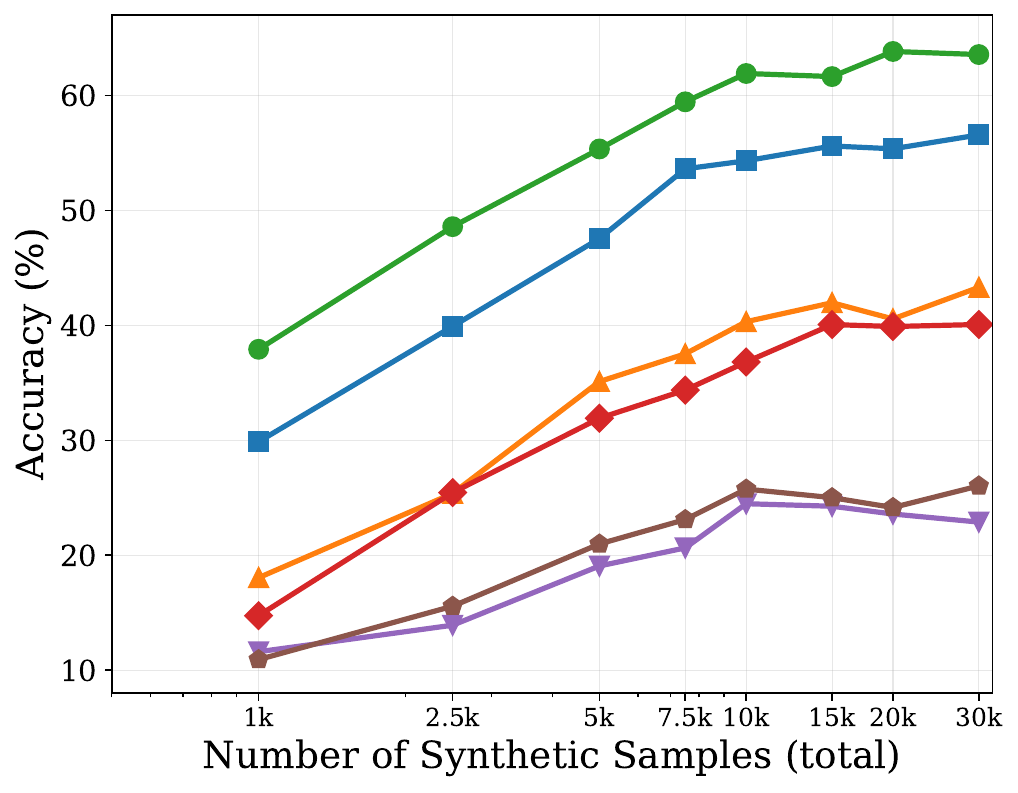}
    \caption{Number of synthetic samples}
    \label{fig:ablation_samples}
\end{subfigure}
\hfill
\begin{subfigure}[b]{0.32\textwidth}
    \centering
    \includegraphics[width=\textwidth]{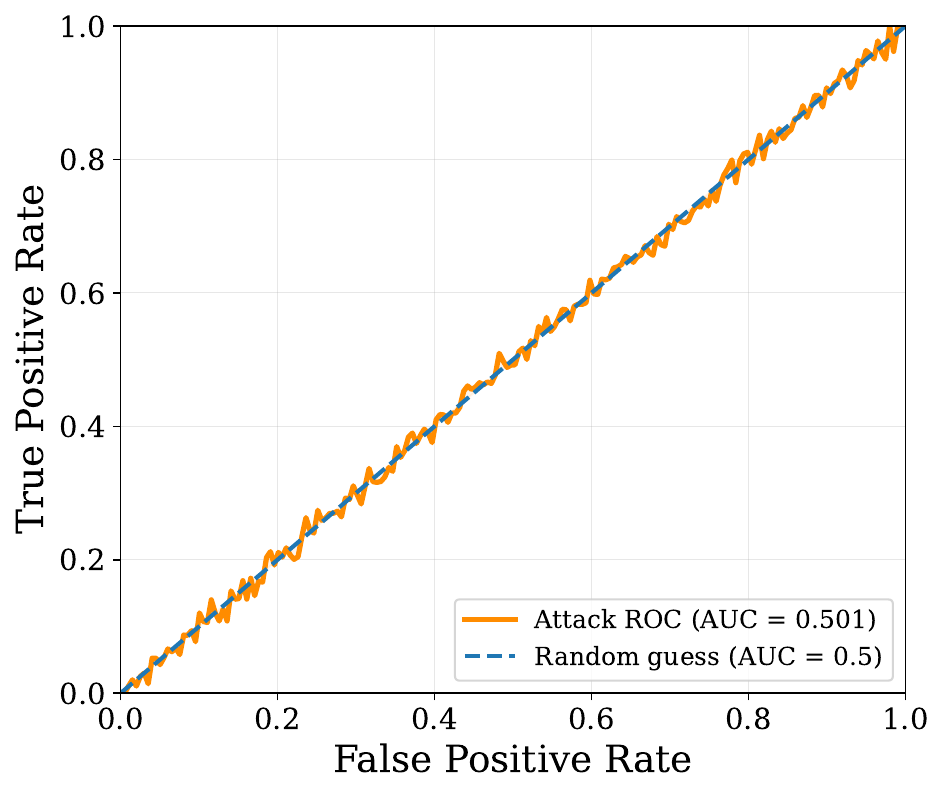}
    \caption{Membership inference attack}
    \label{fig:ablation_mia}
\end{subfigure}

\caption{Ablation studies on ImageNette. (a) Different number of clients. (b) Different heterogeneity settings(c)~Participation rate (d)~Privacy-utility tradeoff for our method at varying $\varepsilon$. (e) Different number of synthetic samples. (f)~MIA ROC curve.}
\label{fig:ablations}
\end{figure*}

% \begin{table}[!ht]
% \centering
% \caption{Ablations varying number of local epochs $E$, latent space dimension $d$, and number of total samples generated $N_{Tot}$ for server-side training for CIFAR100 under practical heterogeneity.}
% \label{tab:various_ablations}
% \begin{tabular}{cc|cc|cc|cc}
% \toprule
% Epochs & Acc. & Latent Dimension & Acc. & Number of Samples & Acc. &  Number of Clients & Acc. \\
% \midrule
% 10   & 31.79 & 50   & 56.77 & 500   & 52.43 & 20   & 59.89 \\
% 30   & 42.26 & 100  & 57.65 & 1000  & 54.87 & 40   & 58.90 \\
% 50   & 54.60 & 150  & 58.88 & 1500  & 56.06 & 80   & 56.78 \\
% 70   & 58.51 & 200  & 59.89 & 2000  & 59.89 & 100   & 55.21 \\
% 90   & 59.38 & 250  & 59.72 & 2500  & 58.33 & 200   & 54.67 \\
% 100  & 59.89 & 300  & 57.13 & 3000  & 54.82 & 500   & 52.37 \\
% \bottomrule
% \end{tabular}
% \vspace{-3mm}
% \end{table}

\label{sec:ablations}

We ablate key design choices on ImageNette with 20 clients under practical heterogeneity ($\alpha{=}0.1$) unless stated otherwise. All baselines are run without DP. Results are shown in Figure~\ref{fig:ablations}.

\paragraph{Number of clients.}
Figure~\ref{fig:ablation_clients} varies the federation size from 20 to 1000. All baselines degrade substantially as data is spread more thinly: FedSD2C drops from 57\% to 42\%, and weaker methods fall below 20\%. Our method is invariant to federation size because the global class-conditional sums are identical regardless of how data is partitioned across clients. The small fluctuations in our curve ($\sim$1\%) are purely due to downstream training variance.

\paragraph{Heterogeneity.}
Figure~\ref{fig:ablation_heterogeneity} varies the Dirichlet concentration from $\alpha{=}0.5$ (mild heterogeneity) to $\alpha{=}0.001$ (extreme). Every baseline degrades monotonically: FedSD2C drops from $\sim$64\% to $\sim$36\%, and the other methods can drop below 20\%. Our method stays at $\sim$62\% across all settings because per-class statistics are aggregated identically regardless of how labels are distributed across clients. This heterogeneity invariance is a direct consequence of the additive structure of the pipeline.

\paragraph{Client participation rate.}
Figure~\ref{fig:ablation_participation} varies the fraction of clients that participate in a single round from 10\% to 100\%. Our method degrades gracefully. Even at 10\% participation (2 out of 20 clients), accuracy remains above 48\%, compared to 28\% for FedSD2C and below 20\% for the other baseline methods. The degradation in our case is small because the statistics from participating clients still combine additively. The main effect of lower participation is fewer samples contributing to the global sums, which increases the relative magnitude of DP noise.

\paragraph{Number of synthetic samples.}
Figure~\ref{fig:ablation_samples} varies the total number of synthetic images generated from the DP-protected Gaussians. Accuracy climbs from $\sim$39\% at 1k samples to $\sim$62\% at 10k, then plateaus. The saturation point roughly matches the size of the real dataset (9.5k images), which is expected; once the synthetic dataset is large enough to cover the learned Gaussian distributions, additional samples provide diminishing returns. Baselines follow a similar saturation pattern but at their respectively lower ceilings.

\paragraph{Autoencoder Architecture Ablation}
\label{sec:ablation}

Table~\ref{tab:ae_ablation} ablates the autoencoder architecture. The key design question is choosing the latent dimension $d$. Higher $d$ gives the encoder more capacity to represent images, however the DP noise injected into each coordinate accumulates across dimensions. Specifically, when independent Gaussian noise of scale $\sigma_\mu$ is added to a $d$-dimensional mean vector, the total $L_2$ magnitude of the noise increases as $\sigma_\mu\sqrt{d} \propto C\sqrt{d}$, where $C$ is the per-class clipping sensitivity. We report $C\sqrt{d}$ as a signal-to-noise proxy. Lower values mean less total noise relative to the statistics.

The table reveals a clear tradeoff. At $d{=}512$ (the raw DC-AE representation at $128{\times}128$ resolution), $C\sqrt{d}{=}1934$ and accuracy collapses to 38--58\% across datasets. The latent space is expressive but noise strongly dominates as it overlays the signal. At $d{=}128$ with $C\sqrt{d} \approx 250$--$400$, accuracy peaks, showing that the latent space is expressive enough and the noise is moderate. Reducing to $d{=}64$ via PCA further lowers $C\sqrt{d}$ to 241, but discarding half the latent dimensions sacrifices representational capacity, resulting in 3--7\% lower accuracy. At $d{=}32$ the noise is minimal ($C\sqrt{d}{=}119$) but the bottleneck is too narrow to capture class structure.

A second trend is that resolution and reconstruction quality matter independently of $d$. DC-AE at $64{\times}64$ consistently outperforms TAESD at $32{\times}32$ despite both using $d{=}64$--$128$, because higher-resolution decoding recovers more visual detail for the downstream classifier. The best overall configuration is DC-AE f32c32-in at $64{\times}64$ with $d{=}128$, which balances representational capacity, reconstruction quality, and noise tolerance.

\begin{table}[t]
\centering
\caption{\textbf{Autoencoder architecture ablation} ($\varepsilon{=}10, \delta{=}10^{-5}$).
$d$ = latent dim; $C$ = per-class sensitivity; $C\sqrt{d}$ = noise-signal proxy.}
\label{tab:ae_ablation}
\resizebox{\textwidth}{!}{%
\begin{tabular}{l c c c c ccccc}
\toprule
\textbf{Codec} & \textbf{Res.} & $d$ & $C$ & $C\sqrt{d}$
  & \textbf{EuroSAT} & \textbf{CIFAR-10} & \textbf{CIFAR-100} & \textbf{ImageNette} & \textbf{BloodMNIST} \\
\midrule
TAESD                      & 32  & 64  & 7--8   & 57--68
  & 76.83 & 70.86 & 24.70 & 47.82 & 72.99 \\
\midrule
DC-AE f32c32-in            & 64  & 128 & 34.15  & 387
  & \textbf{80.59} & \textbf{73.80} & \textbf{28.95} & 60.83 & \textbf{74.73} \\
DC-AE f64c128-mix          & 64  & 128 & 24.45  & 277
  & 79.05 & 73.33 & 28.48 & 61.23 & 74.66 \\
DC-AE f32c32-mix           & 64  & 128 & 21.90  & 248
  & 79.13 & 73.53 & 28.18 & 59.76 & 74.70 \\
DC-AE f32c32-in+PCA64      & 64  & 64  & 30.09  & 241
  & 79.33 & 70.77 & 26.13 & 54.78 & 72.13 \\
DC-AE f32c32-in+PCA128     & 128 & 128 & 71.07  & 804
  & 78.39 & 70.14 & 27.39 & 62.38 & 73.07 \\
DC-AE f32c32-in (raw)      & 128 & 512 & 85.50  & 1934
  & 58.60 & 49.14 & 14.37 & 38.77 & 51.67 \\
DC-AE f32c32-in            & 32  & 32  & 20.98  & 119
  & 75.03 & 67.11 & 25.91 & 53.02 & 74.34 \\
\bottomrule
\end{tabular}
}
\end{table}
\section{Limitations}

Our method assumes labeled classification tasks where each client knows the class identity of its samples. Extending to unsupervised or self-supervised settings, where no class labels are available to condition the statistics on, remains an open problem for our approach and for data-sharing methods in FL more broadly. Furthermore, the Gaussian model assumes reasonable class separability in the latent space. Under noisy labels or highly overlapping class distributions, the per-class statistics would degrade, though this sensitivity is shared by most class-conditional generative approaches. Moreover, single Gaussian per class could limit the quality of generated data. However, while a GMM would offer more expressive power, it would require per-component assignments that are not additively aggregable across clients without revealing additional information. Finally, classes with very few samples yield noisy statistics. With $n_c$ small, the effective noise in the mean scales as $\sigma_\mu / n_c$, which can dominate through overlaying the signal. This is, however, inherent to DP mechanisms that calibrate noise to worst-case sensitivity.

\section{Conclusion}

We have presented FedKT-CSD, a one-shot federated learning framework that generates DP synthetic data by aggregating class-conditional latent statistics from a frozen pretrained autoencoder. The method requires a single communication round, no on-device training, and uploads in the KB regime per client, while providing formal $(\varepsilon, \delta)$-DP guarantees with clean accounting via the Gaussian mechanism. Empirically, FedKT-CSD outperforms existing one-shot FL baselines that operate without any privacy protection, is competitive with or superior to centralized DP synthetic data methods, and scales to 1000+ clients with no performance degradation. The synthetic dataset is versatile: we demonstrate its use for both centralized classifier training and downstream personalized FL, where it achieves state-of-the-art results from a single round of communication. The core insight, that pretrained autoencoders provide a shared low-dimensional space in which simple additive statistics suffice for high-quality data generation, suggests a broader design principle for privacy-preserving federated systems.

% \subsubsection*{Acknowledgments}
% Use unnumbered third level headings for the acknowledgments. All
% acknowledgments, including those to funding agencies, go at the end of the paper.
% Only add this information once your submission is accepted and deanonymized. 

\bibliography{tmlr}

@inproceedings{FedAvg,
  title={Communication-efficient learning of deep networks from decentralized data},
  author={McMahan, H. Brendan and Moore, Eider and Ramage, Daniel and Hampson, Seth and y Arcas, Blaise Aguera},
  booktitle={Proceedings of the 20th International Conference on Artificial Intelligence and Statistics (AISTATS)},
  pages={1273--1282},
  year={2017},
  organization={PMLR}
}

@article{data_sharing_FL,
  title={Federated learning with non-iid data},
  author={Zhao, Yue and Li, Meng and Lai, Liangzhen and Suda, Naveen and Civin, Damon and Chandra, Vikas},
  journal={arXiv preprint arXiv:1806.00582},
  year={2018}
}

@article{Dense,
  title={Dense: Data-free one-shot federated learning},
  author={Zhang, Jie and Chen, Chen and Li, Bo and Lyu, Lingjuan and Wu, Shuang and Ding, Shouhong and Shen, Chunhua and Wu, Chao},
  journal={Advances in Neural Information Processing Systems},
  volume={35},
  pages={21414--21428},
  year={2022}
}

@inproceedings{Per-FedAvg,
 author = {Fallah, Alireza and Mokhtari, Aryan and Ozdaglar, Asuman},
 booktitle = {Advances in Neural Information Processing Systems},
 editor = {H. Larochelle and M. Ranzato and R. Hadsell and M.F. Balcan and H. Lin},
 pages = {3557--3568},
 publisher = {Curran Associates, Inc.},
 title = {Personalized Federated Learning with Theoretical Guarantees: A Model-Agnostic Meta-Learning Approach},
 url = {https://proceedings.neurips.cc/paper_files/paper/2020/file/24389bfe4fe2eba8bf9aa9203a44cdad-Paper.pdf},
 volume = {33},
 year = {2020}
}

@inproceedings{pFedMe,
  title={Personalized federated learning with Moreau envelopes},
  author={Dinh, Canh T. and Tran, Nguyen H. and Nguyen, Tuan D.},
  booktitle={Advances in Neural Information Processing Systems (NeurIPS)},
  volume={33},
  pages={21394--21405},
  year={2020}
}

@inproceedings{
FedFOMO,
title={Personalized Federated Learning with First Order Model Optimization},
author={Michael Zhang and Karan Sapra and Sanja Fidler and Serena Yeung and Jose M. Alvarez},
booktitle={International Conference on Learning Representations},
year={2021},
url={https://openreview.net/forum?id=ehJqJQk9cw}
}

@InProceedings{FedPHP,
author="Li, Xin-Chun
and Zhan, De-Chuan
and Shao, Yunfeng
and Li, Bingshuai
and Song, Shaoming",
editor="Oliver, Nuria
and P{\'e}rez-Cruz, Fernando
and Kramer, Stefan
and Read, Jesse
and Lozano, Jose A.",
title="FedPHP: Federated Personalization with Inherited Private Models",
booktitle="Machine Learning and Knowledge Discovery in Databases. Research Track",
year="2021",
publisher="Springer International Publishing",
address="Cham",
pages="587--602",
isbn="978-3-030-86486-6"
}

@inproceedings{PartialFed,
 author = {Sun, Benyuan and Huo, Hongxing and YANG, YI and Bai, Bo},
 booktitle = {Advances in Neural Information Processing Systems},
 editor = {M. Ranzato and A. Beygelzimer and Y. Dauphin and P.S. Liang and J. Wortman Vaughan},
 pages = {23309--23320},
 publisher = {Curran Associates, Inc.},
 title = {PartialFed: Cross-Domain Personalized Federated Learning via Partial Initialization},
 url = {https://proceedings.neurips.cc/paper_files/paper/2021/file/c429429bf1f2af051f2021dc92a8ebea-Paper.pdf},
 volume = {34},
 year = {2021}
}

@article{FedAMP, title={Personalized Cross-Silo Federated Learning on Non-IID Data}, volume={35}, url={https://ojs.aaai.org/index.php/AAAI/article/view/16960}, DOI={10.1609/aaai.v35i9.16960}, number={9}, journal={Proceedings of the AAAI Conference on Artificial Intelligence}, author={Huang, Yutao and Chu, Lingyang and Zhou, Zirui and Wang, Lanjun and Liu, Jiangchuan and Pei, Jian and Zhang, Yong}, year={2021}, month={May}, pages={7865-7873} }

@inproceedings{APPLE,
  title={Adapt to adaptation: Learning personalization for cross-silo federated learning},
  author={Luo, Jun and Wu, Shandong},
  booktitle={IJCAI: proceedings of the conference},
  volume={2022},
  pages={2166},
  year={2022}
}

@inproceedings{CCVR,
author = {Luo, Mi and Chen, Fei and Hu, Dapeng and Zhang, Yifan and Liang, Jian and Feng, Jiashi},
title = {No fear of heterogeneity: classifer calibration for federated learning with non-IID data},
year = {2024},
isbn = {9781713845393},
publisher = {Curran Associates Inc.},
address = {Red Hook, NY, USA},
booktitle = {Proceedings of the 35th International Conference on Neural Information Processing Systems},
articleno = {457},
numpages = {13},
series = {NIPS '21}
}

@article{FedFed,
  title={Fedfed: Feature distillation against data heterogeneity in federated learning},
  author={Yang, Zhiqin and Zhang, Yonggang and Zheng, Yu and Tian, Xinmei and Peng, Hao and Liu, Tongliang and Han, Bo},
  journal={Advances in neural information processing systems},
  volume={36},
  pages={60397--60428},
  year={2023}
}

@InProceedings{FedFTG,
    author    = {Zhang, Lin and Shen, Li and Ding, Liang and Tao, Dacheng and Duan, Ling-Yu},
    title     = {Fine-Tuning Global Model via Data-Free Knowledge Distillation for Non-IID Federated Learning},
    booktitle = {Proceedings of the IEEE/CVF Conference on Computer Vision and Pattern Recognition (CVPR)},
    month     = {June},
    year      = {2022},
    pages     = {10174-10183}
}

@InProceedings{FedGen,
  title = 	 {Data-Free Knowledge Distillation for Heterogeneous Federated Learning},
  author =       {Zhu, Zhuangdi and Hong, Junyuan and Zhou, Jiayu},
  booktitle = 	 {Proceedings of the 38th International Conference on Machine Learning},
  pages = 	 {12878--12889},
  year = 	 {2021},
  editor = 	 {Meila, Marina and Zhang, Tong},
  volume = 	 {139},
  series = 	 {Proceedings of Machine Learning Research},
  month = 	 {18--24 Jul},
  publisher =    {PMLR},
  url = 	 {https://proceedings.mlr.press/v139/zhu21b.html}
}

@inproceedings{dworkDP,
author = {Dwork, Cynthia},
title = {Differential privacy},
year = {2006},
isbn = {3540359079},
publisher = {Springer-Verlag},
address = {Berlin, Heidelberg},
url = {https://doi.org/10.1007/11787006_1},
doi = {10.1007/11787006_1},
booktitle = {Proceedings of the 33rd International Conference on Automata, Languages and Programming - Volume Part II},
pages = {1–12},
numpages = {12},
location = {Venice, Italy},
series = {ICALP'06}
}

@article{dwork2014algorithmic,
author = {Dwork, Cynthia and Roth, Aaron},
title = {The Algorithmic Foundations of Differential Privacy},
year = {2014},
issue_date = {Aug 2014},
publisher = {Now Publishers Inc.},
address = {Hanover, MA, USA},
volume = {9},
number = {3–4},
issn = {1551-305X},
url = {https://doi.org/10.1561/0400000042},
doi = {10.1561/0400000042},
journal = {Found. Trends Theor. Comput. Sci.},
month = aug,
pages = {211–407},
numpages = {197}
}

@INPROCEEDINGS{FedD3,
  author={Song, Rui and Liu, Dai and Chen, Dave Zhenyu and Festag, Andreas and Trinitis, Carsten and Schulz, Martin and Knoll, Alois},
  booktitle={2023 International Joint Conference on Neural Networks (IJCNN)}, 
  title={Federated Learning via Decentralized Dataset Distillation in Resource-Constrained Edge Environments}, 
  year={2023},
  volume={},
  number={},
  pages={1-10},
  doi={10.1109/IJCNN54540.2023.10191879}}

@inproceedings{coboosting,
title={Enhancing One-Shot Federated Learning Through Data and Ensemble Co-Boosting},
author={Rong Dai and Yonggang Zhang and Ang Li and Tongliang Liu and Xun Yang and Bo Han},
booktitle={The Twelfth International Conference on Learning Representations},
year={2024},
url={https://openreview.net/forum?id=tm8s3696Ox}
}

@InProceedings{FedRep,
  title = 	 {Exploiting Shared Representations for Personalized Federated Learning},
  author =       {Collins, Liam and Hassani, Hamed and Mokhtari, Aryan and Shakkottai, Sanjay},
  booktitle = 	 {Proceedings of the 38th International Conference on Machine Learning},
  pages = 	 {2089--2099},
  year = 	 {2021},
  editor = 	 {Meila, Marina and Zhang, Tong},
  volume = 	 {139},
  series = 	 {Proceedings of Machine Learning Research},
  month = 	 {18--24 Jul},
  publisher =    {PMLR},
  url = 	 {https://proceedings.mlr.press/v139/collins21a.html}
}

@inproceedings{analytic_gaussian,
  title={Improving the gaussian mechanism for differential privacy: Analytical calibration and optimal denoising},
  author={Balle, Borja and Wang, Yu-Xiang},
  booktitle={International conference on machine learning},
  pages={394--403},
  year={2018},
  organization={PMLR}
}

@inproceedings{
one-shot-cvae,
title={Data-Free One-Shot Federated Learning Under Very High Statistical Heterogeneity},
author={Clare Elizabeth Heinbaugh and Emilio Luz-Ricca and Huajie Shao},
booktitle={The Eleventh International Conference on Learning Representations },
year={2023},
url={https://openreview.net/forum?id=_hb4vM3jspB}
}

@inproceedings{Geirhos2019TextureBias,
  author    = {Robert Geirhos and Patricia Rubisch and Claudio Michaelis and Matthias Bethge and Felix A. Wichmann and Wieland Brendel},
  title     = {ImageNet-trained CNNs are biased towards texture; increasing shape bias improves accuracy and robustness},
  booktitle = {International Conference on Learning Representations (ICLR)},
  year      = {2019},
  url       = {https://arxiv.org/abs/1811.12231}
}

@inproceedings{Nguyen2015EasilyFooled,
  author    = {Anh Nguyen and Jason Yosinski and Jeff Clune},
  title     = {Deep Neural Networks Are Easily Fooled: High Confidence Predictions for Unrecognizable Images},
  booktitle = {IEEE Conference on Computer Vision and Pattern Recognition (CVPR)},
  pages     = {427--436},
  year      = {2015},
  doi       = {10.1109/CVPR.2015.7298640}
}

@inproceedings{Wu2019LowResTransfer,
  author    = {Yuanwei Wu and Ziming Zhang and Guanghui Wang},
  title     = {Unsupervised Deep Feature Transfer for Low Resolution Image Classification},
  booktitle = {IEEE/CVF International Conference on Computer Vision Workshops (ICCVW)},
  pages     = {1848--1857},
  year      = {2019},
  url       = {https://arxiv.org/abs/1908.10012}
}

@inproceedings{Yosinski2014Transferable,
author = {Yosinski, Jason and Clune, Jeff and Bengio, Yoshua and Lipson, Hod},
title = {How transferable are features in deep neural networks?},
year = {2014},
publisher = {MIT Press},
address = {Cambridge, MA, USA},
booktitle = {Proceedings of the 28th International Conference on Neural Information Processing Systems - Volume 2},
pages = {3320–3328},
numpages = {9},
location = {Montreal, Canada},
series = {NIPS'14}
}

@inproceedings{Zeiler2014Visualizing,
  author    = {Matthew D. Zeiler and Rob Fergus},
  title     = {Visualizing and Understanding Convolutional Networks},
  booktitle = {European Conference on Computer Vision (ECCV)},
  pages     = {818--833},
  year      = {2014},
  doi       = {10.1007/978-3-319-10590-1_53}
}

@article{FedSD2C,
  title={One-shot federated learning via synthetic distiller-distillate communication},
  author={Zhang, Junyuan and Liu, Songhua and Wang, Xinchao},
  journal={Advances in Neural Information Processing Systems},
  volume={37},
  pages={102611--102633},
  year={2024}
}

@article{pflib,
  author  = {Jianqing Zhang and Yang Liu and Yang Hua and Hao Wang and Tao Song and Zhengui Xue and Ruhui Ma and Jian Cao},
  title   = {PFLlib: A Beginner-Friendly and Comprehensive Personalized Federated Learning Library and Benchmark},
  journal = {Journal of Machine Learning Research},
  year    = {2025},
  volume  = {26},
  number  = {50},
  pages   = {1--10},
  url     = {http://jmlr.org/papers/v26/23-1634.html}
}

@inproceedings{Pre-Text,
author = {Hou, Charlie and Shrivastava, Akshat and Zhan, Hongyuan and Conway, Rylan and Le, Trang and Sagar, Adithya and Fanti, Giulia and Lazar, Daniel},
title = {PrE-Text: training language models on private federated data in the age of LLMs},
year = {2024},
publisher = {JMLR.org},
booktitle = {Proceedings of the 41st International Conference on Machine Learning},
articleno = {766},
numpages = {19},
location = {Vienna, Austria},
series = {ICML'24}
}

@inproceedings{dp-prompt,
    title = "Locally Differentially Private Document Generation Using Zero Shot Prompting",
    author = "Utpala, Saiteja  and
      Hooker, Sara  and
      Chen, Pin-Yu",
    editor = "Bouamor, Houda  and
      Pino, Juan  and
      Bali, Kalika",
    booktitle = "Findings of the Association for Computational Linguistics: EMNLP 2023",
    month = dec,
    year = "2023",
    address = "Singapore",
    publisher = "Association for Computational Linguistics",
    url = "https://aclanthology.org/2023.findings-emnlp.566/",
    doi = "10.18653/v1/2023.findings-emnlp.566",
    pages = "8442--8457"}

@misc{dp-kde,
      title={Private Text Generation by Seeding Large Language Model Prompts}, 
      author={Supriya Nagesh and Justin Y. Chen and Nina Mishra and Tal Wagner},
      year={2025},
      eprint={2502.13193},
      archivePrefix={arXiv},
      primaryClass={cs.CL},
      url={https://arxiv.org/abs/2502.13193}, 
}

@INPROCEEDINGS{diffusion_federated,
  author={Hoefler, Maximilian Andreas and Mazouka, Tatsiana and Mueller, Karsten and Samek, Wojciech},
  booktitle={2024 IEEE International Conference on Big Data (BigData)}, 
  title={Boosting Federated Learning with Diffusion Models for Non-IID and Imbalanced Data}, 
  year={2024},
  volume={},
  number={},
  pages={7790-7799},
  doi={10.1109/BigData62323.2024.10825355}}

@article{dc-ae,
  title={Deep Compression Autoencoder for Efficient High-Resolution Diffusion Models},
  author={Chen, Junyu and Cai, Han and Chen, Junsong and Xie, Enze and Yang, Shang and Tang, Haotian and Li, Muyang and Lu, Yao and Han, Song},
  journal={arXiv preprint arXiv:2410.10733},
  year={2024}
}

@inproceedings{
balle,
title={Variational image compression with a scale hyperprior},
author={Johannes Ballé and David Minnen and Saurabh Singh and Sung Jin Hwang and Nick Johnston},
booktitle={International Conference on Learning Representations},
year={2018},
url={https://openreview.net/forum?id=rkcQFMZRb},
}

@article{dp-ntk,
  title={Differentially Private Neural Tangent Kernels (DP-NTK) for Privacy-Preserving Data Generation},
  author={Yang, Yilin and Adamczewski, Kamil and Li, Xiaoxiao and Sutherland, Danica J and Park, Mijung},
  journal={Journal of Artificial Intelligence Research},
  volume={81},
  pages={683--700},
  year={2024}
}

@inproceedings{dp-merf,
  title={Dp-merf: Differentially private mean embeddings with randomfeatures for practical privacy-preserving data generation},
  author={Harder, Frederik and Adamczewski, Kamil and Park, Mijung},
  booktitle={International conference on artificial intelligence and statistics},
  pages={1819--1827},
  year={2021},
  organization={PMLR}
}

@inproceedings{dpimagebench,
  title={Dpimagebench: A unified benchmark for differentially private image synthesis},
  author={Gong, Chen and Li, Kecen and Lin, Zinan and Wang, Tianhao},
  booktitle={Proceedings of the 2025 ACM SIGSAC Conference on Computer and Communications Security},
  pages={4139--4153},
  year={2025}
}

@inproceedings{dp-lora,
  title={Differentially private fine-tuning of diffusion models},
  author={Tsai, Yu-Lin and Li, Yizhe and Yu, Chia-Mu and Ren, Xuebin and Chen, Po-Yu and Chen, Zekai and Buet-Golfouse, Francois},
  booktitle={Proceedings of the IEEE/CVF International Conference on Computer Vision},
  pages={4561--4571},
  year={2025}
}

@inproceedings{dp-kernel,
author = {Jiang, Dihong and Sun, Sun and Yu, Yaoliang},
title = {Functional R\'{e}nyi differential privacy for generative modeling},
year = {2023},
publisher = {Curran Associates Inc.},
address = {Red Hook, NY, USA},
booktitle = {Proceedings of the 37th International Conference on Neural Information Processing Systems},
articleno = {652},
numpages = {21},
location = {New Orleans, LA, USA},
series = {NIPS '23}
}

@inproceedings{fedxds,
  title={FedXDS: Leveraging Model Attribution Methods to counteract Data Heterogeneity in Federated Learning},
  author={Hoefler, Maximilian Andreas and Mueller, Karsten and Samek, Wojciech},
  booktitle={Proceedings of the IEEE/CVF International Conference on Computer Vision},
  pages={4572--4581},
  year={2025}
}

@inproceedings{secagg,
author = {Bonawitz, Keith and Ivanov, Vladimir and Kreuter, Ben and Marcedone, Antonio and McMahan, H. Brendan and Patel, Sarvar and Ramage, Daniel and Segal, Aaron and Seth, Karn},
title = {Practical Secure Aggregation for Privacy-Preserving Machine Learning},
year = {2017},
isbn = {9781450349468},
publisher = {Association for Computing Machinery},
address = {New York, NY, USA},
url = {https://doi.org/10.1145/3133956.3133982},
doi = {10.1145/3133956.3133982},
booktitle = {Proceedings of the 2017 ACM SIGSAC Conference on Computer and Communications Security},
pages = {1175–1191},
numpages = {17},
location = {Dallas, Texas, USA},
series = {CCS '17}
}

@article{fedpft,
  title={Foundation models meet federated learning: A one-shot feature-sharing method with privacy and performance guarantees},
  author={Beitollahi, Mahdi and Bie, Alex and Hemati, Sobhan and Brunswic, Leo Maxime and Li, Xu and Chen, Xi and Zhang, Guojun},
  journal={Transactions on Machine Learning Research},
  year={2025}
}

@INPROCEEDINGS{MIA,
  author={Carlini, Nicholas and Chien, Steve and Nasr, Milad and Song, Shuang and Terzis, Andreas and Tramèr, Florian},
  booktitle={2022 IEEE Symposium on Security and Privacy (SP)}, 
  title={Membership Inference Attacks From First Principles}, 
  year={2022},
  volume={},
  number={},
  pages={1897-1914},
  doi={10.1109/SP46214.2022.9833649}}

@article{class_shuffling,
author = {Kairouz, Peter and McMahan, H. Brendan and Avent, Brendan and Bellet, Aur\'{e}lien and Bennis, Mehdi and Nitin Bhagoji, Arjun and Bonawitz, Kallista and Charles, Zachary and Cormode, Graham and Cummings, Rachel and D’Oliveira, Rafael G. L. and Eichner, Hubert and El Rouayheb, Salim and Evans, David and Gardner, Josh and Garrett, Zachary and Gasc\'{o}n, Adri\`{a} and Ghazi, Badih and Gibbons, Phillip B. and Gruteser, Marco and Harchaoui, Zaid and He, Chaoyang and He, Lie and Huo, Zhouyuan and Hutchinson, Ben and Hsu, Justin and Jaggi, Martin and Javidi, Tara and Joshi, Gauri and Khodak, Mikhail and Konecn\'{y}, Jakub and Korolova, Aleksandra and Koushanfar, Farinaz and Koyejo, Sanmi and Lepoint, Tancr\`{e}de and Liu, Yang and Mittal, Prateek and Mohri, Mehryar and Nock, Richard and \"{O}zg\"{u}r, Ayfer and Pagh, Rasmus and Qi, Hang and Ramage, Daniel and Raskar, Ramesh and Raykova, Mariana and Song, Dawn and Song, Weikang and Stich, Sebastian U. and Sun, Ziteng and Suresh, Ananda Theertha and Tram\`{e}r, Florian and Vepakomma, Praneeth and Wang, Jianyu and Xiong, Li and Xu, Zheng and Yang, Qiang and Yu, Felix X. and Yu, Han and Zhao, Sen},
title = {Advances and Open Problems in Federated Learning},
year = {2021},
issue_date = {Jun 2021},
publisher = {Now Publishers Inc.},
address = {Hanover, MA, USA},
volume = {14},
number = {1–2},
issn = {1935-8237},
url = {https://doi.org/10.1561/2200000083},
doi = {10.1561/2200000083},
journal = {Found. Trends Mach. Learn.},
month = jun,
pages = {1–210},
numpages = {214}
}
\bibliographystyle{tmlr}

\appendix

\section{Scaling with more Clients}
\label{sec:scaling}

Table~\ref{tab:scaling_1000} examines the effect of increasing the number of clients to 1000 under $\alpha{=}0.1$. All baselines degrade substantially as each client's local dataset shrinks and model-based one-shot methods must aggregate increasingly poor local models. FedSD2C, the most resilient baseline, drops from 76\% to 62\% on EuroSAT. FedKT-CSD, by contrast, is \textbf{invariant to federation size}: because the transmitted sums are additive, the global sums, and therefore the synthetic dataset, are identical regardless of how data is partitioned across clients.

\begin{table}[ht]
\centering
\caption{\textbf{Scaling to 1000 clients} ($\alpha{=}0.1$).
OSFL baselines degrade substantially; our method is invariant to client count.}
\label{tab:scaling_1000}
\begin{tabular}{l cccc}
\toprule
\textbf{Method} & \textbf{EuroSAT} & \textbf{CIFAR-100} & \textbf{ImageNette} & \textbf{BloodMNIST} \\
\midrule
FedSD2C            & 62.47 & 21.83 & 41.52 & 55.36 \\
DENSE              & 32.18 & 4.63  & 18.74 & 30.52 \\
CoBoosting         & 28.45 & 9.82  & 16.37 & 27.63 \\
FedD3              & 30.72 & 3.15  & 15.83 & 33.47 \\
FedCVAE            & 31.46 & 3.28  & 16.25 & 34.82 \\
\midrule
\textbf{Ours} ($\varepsilon{=}10$) & \textbf{81.09} & \textbf{28.47} & \textbf{60.95} & \textbf{75.71} \\
\bottomrule
\end{tabular}
\end{table}

\begin{table}[ht]
\centering
\caption{\textbf{One-shot FL comparison --- 100 clients.}
Same setup as the main paper. All baselines degrade with more clients;
our method is invariant to federation size.}
\label{tab:osfl_100c}
\resizebox{\textwidth}{!}{%
\begin{tabular}{ll ccc ccc ccc ccc}
\toprule
& & \multicolumn{3}{c}{\textbf{EuroSAT}}
  & \multicolumn{3}{c}{\textbf{CIFAR-100}}
  & \multicolumn{3}{c}{\textbf{ImageNette}}
  & \multicolumn{3}{c}{\textbf{BloodMNIST}} \\
\cmidrule(lr){3-5}\cmidrule(lr){6-8}\cmidrule(lr){9-11}\cmidrule(lr){12-14}
& \textbf{Method}
  & $\alpha{=}0.1$ & $\alpha{=}0.05$ & Path.
  & $\alpha{=}0.1$ & $\alpha{=}0.05$ & Path.
  & $\alpha{=}0.1$ & $\alpha{=}0.05$ & Path.
  & $\alpha{=}0.1$ & $\alpha{=}0.05$ & Path. \\
\midrule
& FedSD2C
  & 72.46 & 65.13 & 51.38
  & 27.19 & 19.15 & 25.94
  & 53.85 & 44.72 & 32.46
  & 65.83 & 57.46 & 62.15 \\
& DENSE
  & 50.27 & 41.83 & 32.46
  & 4.63 & 2.99 & 2.48
  & 33.58 & 24.73 & 16.85
  & 47.52 & 41.36 & 48.73 \\
& CoBoosting
  & 46.83 & 38.52 & 29.74
  & 12.80 & 11.42 & 9.41
  & 30.47 & 21.85 & 15.36
  & 43.67 & 38.42 & 45.82 \\
& FedD3
  & 35.62 & 31.47 & 25.83
  & 1.73 & 1.68 & 2.03
  & 19.35 & 17.42 & 14.28
  & 46.14 & 41.27 & 36.53 \\
& FedCVAE
  & 34.28 & 30.56 & 24.72
  & 1.65 & 1.70 & 2.12
  & 20.18 & 18.06 & 14.83
  & 47.62 & 42.38 & 37.15 \\
\midrule
& \textbf{Ours (DP)}
  & \textbf{81.52} & \textbf{82.07} & \textbf{82.11}
  & \textbf{28.62} & \textbf{28.99} & \textbf{30.20}
  & \textbf{62.15} & \textbf{62.28} & \textbf{61.37}
  & \textbf{75.44} & \textbf{75.59} & \textbf{77.04} \\
\bottomrule
\end{tabular}
}
\end{table}

\section{Comparison of Output Modality: Pixel-Space vs.\ Feature-Space Synthesis}
\label{app:output_modality}

A fundamental distinction between FedKT-CSD and other methods such as FedPFT~\cite{fedpft}
or FedSD2C~\cite{FedSD2C} lies in the modality of the synthesized output. FedPFT generates
synthetic features in the embedding space of a specific frozen foundation model (e.g.\ CLIP),
and FedSD2C generates only small distilled datasets. By contrast, FedKT-CSD decodes its
DP-protected latent statistics back to pixel space via the frozen decoder, producing a
synthetic image dataset $X^*$ that is fully model-agnostic.

\paragraph{Implications for downstream use.}
The pixel-space output of FedKT-CSD unlocks a strictly broader set of downstream
applications than feature-space synthesis:

\begin{itemize}
    \item \textbf{Architecture flexibility.} Any model architecture can be trained on
    $X^*$ directly, including convolutional networks, vision transformers, or
    task-specific models, without any dependency on the autoencoder used to generate it. We demonstrated this on a p-FL downstream task.
    
    \item \textbf{Pretraining and transfer learning.} $X^*$ can serve as a pretraining
    corpus for any feature extractor, enabling transfer to downstream tasks beyond the
    original classification problem.

    \item \textbf{Knowledge distillation.} $X^*$ can be used as a distillation corpus
    for any teacher-student pair, independent of the teacher's architecture.

    \item \textbf{Data augmentation.} $X^*$ can be combined with local client data for
    fine-tuning or mixed into any training pipeline as auxiliary data.

    \item \textbf{Multi-task reuse.} Because $X^*$ consists of images rather than
    task-specific features, it can be reused across multiple downstream tasks without
    additional privacy cost --- generating more samples from the DP-protected Gaussians
    incurs no further privacy expenditure by the post-processing theorem.

    \item \textbf{Third-party sharing.} $X^*$ can be shared with parties who have no
    knowledge of the autoencoder architecture or weights, since it is simply a labeled
    image dataset.
\end{itemize}

\section{Detailed Comparison with FedPFT}
\label{app:fedpft}

FedPFT~\cite{fedpft} is the most closely related prior work: both methods encode images through a frozen pretrained model, compute class-conditional distributions, and transmit them in one round. FedPFT includes a theoretical treatment of DP and some privacy evaluation, but it is not a core architectural design choice. As we detail below, the way noise is applied in FedPFT is likely to degrade rapidly at meaningful privacy budgets. The comparison below reflects our interpretation of their described mechanism; we encourage readers to consult the original work.

The most consequential difference is noise scaling. In FedPFT, each client adds noise to its own GMM parameters before uploading. With $N$ clients each holding $n/N$ samples, each client's noise is calibrated to its local sample size. For $K{>}1$ components the server concatenates noisy GMMs without averaging, so each synthetic feature carries one client's full noise. For $K{=}1$ the server can average, reducing noise by $\sqrt{N}$, but without secure aggregation the server still observes individual contributions. In our approach, clients upload masked additive sums via secure aggregation; the server sees only the global total, and the effective noise matches the centralized setting with all $n$ samples. With 100 clients this yields roughly $10\times$ better SNR than FedPFT $K{=}1$.

Second, FedPFT assumes $\|f\|_2 \leq 1$, which holds for CLIP but not for general extractors. Our autoencoder prior provides a known norm distribution from which we derive a data-independent clipping radius, making the guarantee unconditional on model choice. Third, FedPFT does not account for composition across classes, and their $K{>}1$ GMM fitting involves sequential steps (K-means then per-component estimation) that require sequential composition. Our single-pass additive sums avoid this complexity: for each class we release only two aggregated statistics in the main text (or three with privatized counts), so the privacy accounting is simple. Finally, our decoder produces reusable pixel-space images rather than features tied to a specific classifier head.

\section{Parallel Composition and Privacy Accounting}
\label{app:parallel_composition}

A key structural property of our pipeline is that, under the public-label assumption,
each record contributes to exactly one class-specific mechanism. This enables parallel
composition across classes, while the multiple releases within a class compose
sequentially.

\begin{theorem}[Parallel Composition~\cite{dwork2014algorithmic}]
If $D = D_1 \cup \dots \cup D_k$ are disjoint subsets and each mechanism $M_i$
applied to $D_i$ satisfies $(\varepsilon_i,\delta_i)$-DP, then the joint release
of all outputs satisfies $(\max_i \varepsilon_i,\max_i \delta_i)$-DP.
\end{theorem}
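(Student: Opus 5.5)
The plan is the standard one: fix an arbitrary pair of neighboring datasets and show that the joint output distribution satisfies the $(\varepsilon,\delta)$ inequality with $\max_i\varepsilon_i$ and $\max_i\delta_i$. Throughout, I take the mechanisms $M_i$ to use independent randomness and the partition $D=D_1\cup\dots\cup D_k$ to be determined by a data-independent rule. In our setting the rule is the public class label, so within-class replacement adjacency keeps every record in its part.

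\textbf{Localizing the change.} Let $D\sim D'$ differ in a single record. Since the partition rule is fixed and the changed record lies in exactly one part, there is a unique index $j$ with $D_j\sim D'_j$ and $D_i=D'_i$ for all $i\neq j$. (Under add/remove adjacency one uses the same argument; the record again affects only the one part it is assigned to.)

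\textbf{Factorizing the output.} Because the randomness is independent, the joint output $M(D)=(M_1(D_1),\dots,M_k(D_k))$ has product law $\prod_i P_{M_i(D_i)}$. Fix a measurable set $S$ in the product output space. For each fixed value $o_{-j}$ of the other coordinates, let $S_{o_{-j}}$ be the section of $S$ in the $j$-th coordinate. Applying the $(\varepsilon_j,\delta_j)$-guarantee of $M_j$ to that section gives
\begin{equation*}
\Pr[M_j(D_j)\in S_{o_{-j}}]\le e^{\varepsilon_j}\Pr[M_j(D'_j)\in S_{o_{-j}}]+\delta_j .
\end{equation*}

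\textbf{Integrating.} Integrate this inequality over $o_{-j}$ against $\prod_{i\ne j}P_{M_i(D_i)}$, which is the same measure under $D$ and $D'$. By Fubini, the $\delta_j$ term integrates against a probability measure and so contributes at most $\delta_j$. This yields
\begin{equation*}
\Pr[M(D)\in S]\le e^{\varepsilon_j}\Pr[M(D')\in S]+\delta_j\le e^{\max_i\varepsilon_i}\Pr[M(D')\in S]+\max_i\delta_i .
\end{equation*}
Since $D\sim D'$ and $S$ were arbitrary, the claim follows.

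\textbf{Expected obstacle.} The only delicate point is the first step: the part that is changed must be the only one that differs. This fails if the partition depends on private data, for example if labels were private and a record could move between classes, since then two parts would change. That is precisely why the public-label assumption and within-class replacement adjacency are needed. The sectioning step is routine measure theory once independence is in place.

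Applied to our mechanism, each class-$c$ release is $(\varepsilon,\delta)$-DP by Proposition~\ref{prop:dp_guarantee}, using sequential composition within the class. Parallel composition across classes then gives $(\varepsilon,\delta)$-DP for the whole release.
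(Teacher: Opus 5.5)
Your proof is correct and is the standard argument for parallel composition: localize the change to a single part, then integrate the $(\varepsilon_j,\delta_j)$ inequality on the sections of $S$ against the unchanged product law of the other outputs. The paper gives no proof of its own and simply cites Dwork and Roth. Your version also makes explicit two hypotheses that the paper's informal statement omits but genuinely needs. The first is independent randomness across the $M_i$: shared noise would let the difference of two outputs reveal the changed part exactly. The second is a record-wise, data-independent partition together with an adjacency that keeps the changed record in one part. Both are supplied by the paper's public-label, within-class-replacement setting with independent per-class noise.
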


In our setting, the dataset decomposes across classes as
$D = D_{c=1} \cup \dots \cup D_{c=K}$, since each sample has exactly one class label.
For a fixed class $c$, the mechanism releases two noisy quantities in the main text:
the mean sum $\widetilde{\mathbf S}_{\mu,c}$ and the second-moment sum
$\widetilde{\mathbf S}_{\Sigma,c}$. These two releases access the same class-$c$
records and therefore compose sequentially within the class. In the add/remove
extension of Section~\ref{app:addremove}, the privatized count $\tilde n_c$ adds a
third sequentially composed release.

Across classes, however, the class-specific datasets are disjoint, so the classwise
mechanisms compose in parallel. Therefore, under within-class replacement adjacency
and public labels, the overall privacy cost is the privacy cost of a single
class-specific mechanism: $(\varepsilon,\delta)$ in the main text, and
$(\varepsilon,\delta)$ under the budget split used in
Section~\ref{app:addremove}.

Secure aggregation does not change the DP accounting itself; it only hides individual
client contributions before the global classwise sums are formed.

\section{Further Visualization and Justification of Image Quality}

\begin{figure*}[t]
\centering
\includegraphics[width=0.95\textwidth]{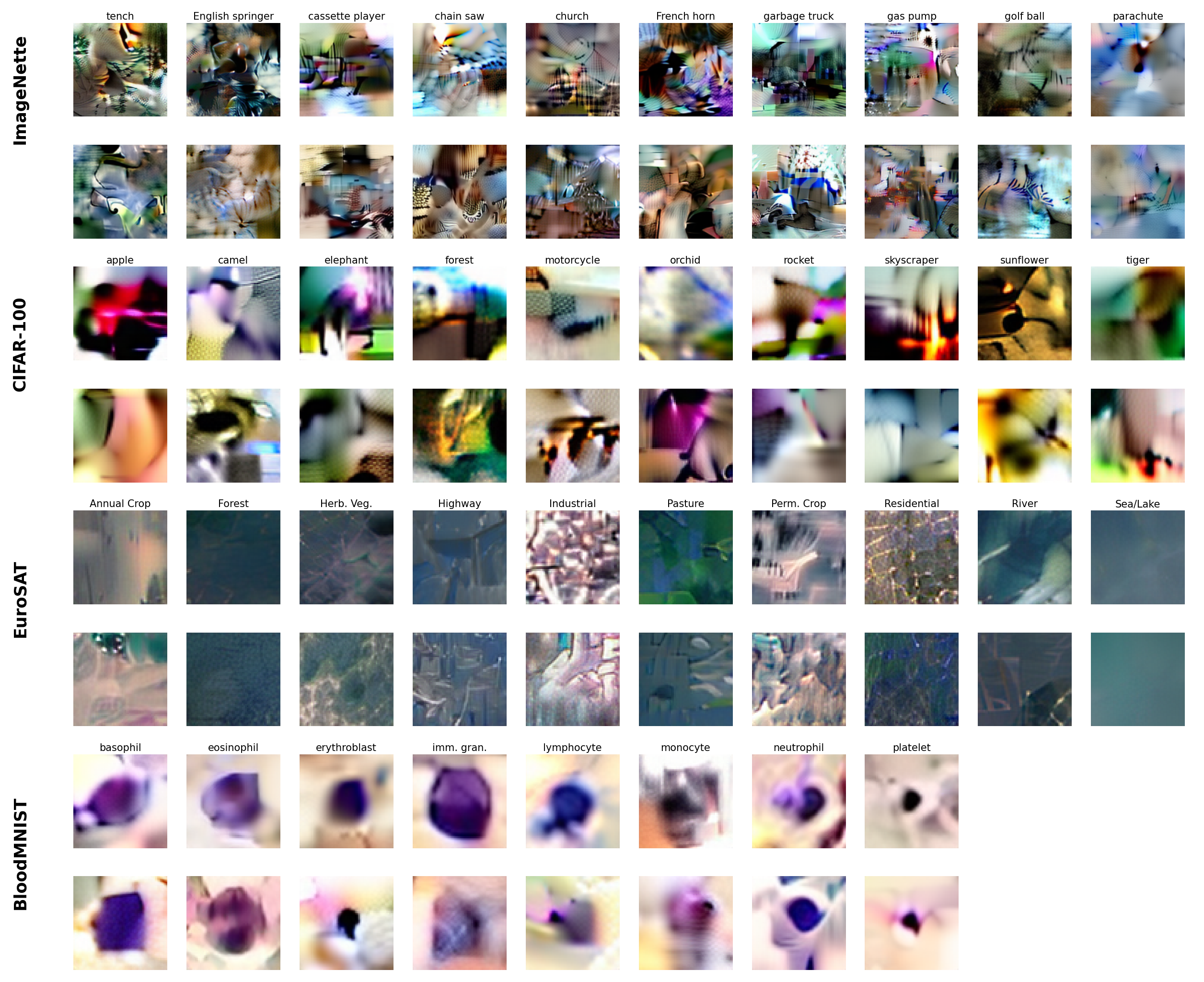}
\caption{Synthetic images generated by FedKT-CSD ($\varepsilon{=}10$, $\delta{=}10^{-5}$) for each dataset.}
\label{fig:all_samples}
\end{figure*}

We show generated images of our method (\autoref{fig:all_samples}) for all datasets used, using at most 10 classes from each dataset for better visualization. For EuroSAT and BloodMNIST the images are clearly recognizable with respect to the original datasets and their respective classes. However, for ImageNette and CIFAR-100 the generated images are more abstract representations. There are several reasons why these representations remain useful for downstream classification. Prior work has established that deep networks do not require human-recognizable image quality to learn effective representations. \citet{Geirhos2019TextureBias} showed that convolutional networks rely heavily on local texture statistics rather than global shape, achieving high accuracy even when shape information is disrupted. \citet{Nguyen2015EasilyFooled} demonstrated that patterns appearing as noise to humans can elicit near-perfect classifier confidence, confirming that networks latch onto features orthogonal to human visual criteria. On the resolution front, \citet{Wu2019LowResTransfer} showed that embeddings learned from images as small as $16\times16$ pixels transfer competitively to standard benchmarks. More broadly, \citet{Yosinski2014Transferable} found that mid-to-upper convolutional features remain highly reusable across domains regardless of source image appearance, and \citet{Zeiler2014Visualizing} visualized intermediate activations that, while lacking human interpretability, are instrumental for classification. Our synthetic images preserve exactly the kind of local texture and class-conditional structure that these studies identify as sufficient for representation learning.

\section{Extension to Add/Remove Adjacency via Count Privatization}
\label{app:addremove}

The main text adopts within-class replacement adjacency, which fixes the per-class count
$n_c$ and protects the \emph{values} of individual records but not participation itself.
Here we show that a straightforward extension recovers full add/remove record-level DP.

\paragraph{Add/remove adjacency.}
Under add/remove adjacency, adjacent datasets $\mathcal{D} \sim \mathcal{D}'$ differ by the
addition or removal of a single class-$c$ record. This changes three quantities:
$\mathbf{S}_{\mu,c}$ by at most $R$ in $\ell_2$-norm, $\mathbf{S}_{\Sigma,c}$ by at most
$R^2$ in Frobenius norm, and $n_c$ by exactly $1$. The sensitivities are therefore:
\begin{equation}
\Delta_\mu^{\rm ao} = R, \qquad \Delta_\Sigma^{\rm ao} = R^2, \qquad \Delta_n = 1.
\end{equation}
Note that the sensitivities on $\mathbf{S}_{\mu,c}$ and $\mathbf{S}_{\Sigma,c}$ are halved
compared to replacement adjacency in the main text, since only one term is added
or removed rather than swapped.

\paragraph{Budget allocation.}
We partition the total privacy budget $(\varepsilon, \delta)$ as
$\varepsilon = \varepsilon_{\rm stats} + \varepsilon_n$ and
$\delta = \delta_{\rm stats} + \delta_n$, allocating
$(\varepsilon_{\rm stats},\delta_{\rm stats})$ to the mean and covariance sums and
$(\varepsilon_n,\delta_n)$ to the count. Splitting
$(\varepsilon_{\rm stats},\delta_{\rm stats})$ equally between the two statistics, we
set $\sigma_\mu^{\rm ao}$, $\sigma_\Sigma^{\rm ao}$, and $\sigma_n$ to be the smallest
positive values satisfying
\begin{equation}
\Phi\!\left(\frac{\Delta}{2\sigma} - \frac{\varepsilon' \sigma}{\Delta}\right)
-
e^{\varepsilon'}
\Phi\!\left(-\frac{\Delta}{2\sigma} - \frac{\varepsilon' \sigma}{\Delta}\right)
\le \delta',
\end{equation}
with $(\varepsilon',\delta') =
(\varepsilon_{\rm stats}/2,\delta_{\rm stats}/2)$ and $\Delta = R$ for
$\sigma_\mu^{\rm ao}$, $(\varepsilon',\delta') =
(\varepsilon_{\rm stats}/2,\delta_{\rm stats}/2)$ and $\Delta = R^2$ for
$\sigma_\Sigma^{\rm ao}$, and $(\varepsilon',\delta') =
(\varepsilon_n,\delta_n)$ and $\Delta = 1$ for $\sigma_n$.
The noisy count $\tilde n_c = n_c + \eta_n$ with
$\eta_n \sim \mathcal{N}(0,\sigma_n^2)$ is then used in place of $n_c$
throughout the post-processing steps in the main text.

\begin{proposition}[Add/Remove Record-Level DP]
\label{prop:addremove_dp}
Under the notation above, the joint release of
$(\widetilde{\mathbf{S}}_{\mu,c},\, \widetilde{\mathbf{S}}_{\Sigma,c},\, \tilde{n}_c)$
satisfies $(\varepsilon, \delta)$-differential privacy under add/remove adjacency.
\end{proposition}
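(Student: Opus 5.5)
The plan mirrors the proof of Proposition~\ref{prop:dp_guarantee}: bound each sensitivity under add/remove adjacency, invoke the analytic Gaussian mechanism~\cite{analytic_gaussian} once per released quantity, and then combine by basic composition and post-processing.

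\textbf{Step 1: sensitivities.} Fix a class $c$ and let $\mathcal{D}'=\mathcal{D}\cup\{(x,c)\}$, or the reverse. With clipped latent $\bar z$, $\|\bar z\|_2\le R$, the three differences are
\begin{equation*}
\mathbf{S}_{\mu,c}(\mathcal{D}')-\mathbf{S}_{\mu,c}(\mathcal{D})=\pm\bar z,\qquad
\mathbf{S}_{\Sigma,c}(\mathcal{D}')-\mathbf{S}_{\Sigma,c}(\mathcal{D})=\pm\bar z\bar z^\top,\qquad
n_c'-n_c=\pm1 .
\end{equation*}
This gives $\Delta^{\rm ao}_\mu\le R$ and $\Delta_n=1$. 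For the second moment, $\|\bar z\bar z^\top\|_F=\|\bar z\|_2^2\le R^2$. The rank-one Frobenius identity is the only small calculation needed. A record of class $c'\neq c$ leaves all class-$c$ quantities unchanged, so those sensitivities are zero.

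\textbf{Step 2: per-release guarantees.} Viewing $\mathbf{S}_{\Sigma,c}$ as a vector in $\mathbb{R}^{d^2}$, the noise in each release is isotropic Gaussian with the calibrated $\sigma$. The analytic Gaussian mechanism then gives the following guarantees:
\begin{itemize}
\item $\widetilde{\mathbf{S}}_{\mu,c}$ is $(\varepsilon_{\rm stats}/2,\delta_{\rm stats}/2)$-DP;
\item $\widetilde{\mathbf{S}}_{\Sigma,c}$ is $(\varepsilon_{\rm stats}/2,\delta_{\rm stats}/2)$-DP;
\item $\tilde n_c$ is $(\varepsilon_n,\delta_n)$-DP.
\end{itemize}
Each guarantee holds with respect to the add/remove neighbouring relation.

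\textbf{Step 3: composition.} The three mechanisms use independent noise, so basic sequential composition bounds the joint release by
\begin{equation*}
(\varepsilon_{\rm stats}+\varepsilon_n,\;\delta_{\rm stats}+\delta_n)=(\varepsilon,\delta).
\end{equation*}
Using $\tilde n_c$ instead of $n_c$ in the normalization, bias correction and PSD projection is post-processing and costs nothing further. If the joint release over all classes is wanted, parallel composition across the disjoint class partitions applies, as in Appendix~\ref{app:parallel_composition}. Under add/remove adjacency a single record still touches exactly one class's mechanism.

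\textbf{Main obstacle.} The one step needing care is the true count $n_c$. In the main text it was released in the clear because replacement adjacency fixes it; under add/remove adjacency it is no longer invariant. I would stress that $n_c$ is never released directly, only $\tilde n_c$, and that every downstream quantity depends on the data solely through the three noisy releases. Consequently secure aggregation must reveal only the noisy sums, which holds because the noise is added before anything leaves the aggregation boundary. A possible negative or near-zero $\tilde n_c$ is handled by deterministic clamping, which is again post-processing.
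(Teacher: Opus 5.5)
Your proposal is correct and follows the paper's proof. Both apply the analytic Gaussian mechanism to each of the three releases under its allocated sub-budget, then combine by basic composition and post-processing. Your explicit add/remove sensitivity computation, including $\|\bar z\bar z^\top\|_F=\|\bar z\|_2^2\le R^2$, spells out bounds the paper only states before the proposition.

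One caveat concerns your remark that secure aggregation reveals only noisy sums. In the paper's protocol the server recovers the exact aggregates, including the exact $n_c$, and is trusted to add the noise. The guarantee therefore covers what the server releases, not what secure aggregation exposes. This does not affect your argument, since the proposition concerns only the released triple.
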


\begin{proof}
By the analytic Gaussian mechanism~\cite{analytic_gaussian}, each query satisfies DP
under its allocated sub-budget when Gaussian noise is calibrated according to the
displayed condition above. Thus
$\widetilde{\mathbf S}_{\mu,c}$ is
$(\varepsilon_{\rm stats}/2,\delta_{\rm stats}/2)$-DP,
$\widetilde{\mathbf S}_{\Sigma,c}$ is
$(\varepsilon_{\rm stats}/2,\delta_{\rm stats}/2)$-DP, and
$\tilde n_c$ is $(\varepsilon_n,\delta_n)$-DP.
By basic composition, the joint release satisfies
$(\varepsilon_{\rm stats}+\varepsilon_n,\delta_{\rm stats}+\delta_n)
= (\varepsilon,\delta)$-DP.
All subsequent operations are deterministic post-processing and preserve DP.
\end{proof}

We note that compared to the replacement adjacency in the main text, the noise on
$\mathbf{S}_{\mu,c}$ and $\mathbf{S}_{\Sigma,c}$ is reduced by a factor of two due to the
halved sensitivity, partially offsetting the additional budget allocated to the count. The
utility impact of count privatization depends on the ratio $\sigma_n / n_c$ and is left
for future empirical investigation.

\section{Secure Aggregation Details}
\label{app:secagg}

We provide additional detail on how the secure aggregation protocol of \cite{secagg} applies to our setting. We assume an honest-but-curious server: the server follows the protocol correctly but attempts to infer as much as possible from the messages it receives.

\paragraph{Privacy scope.}
The differential privacy guarantee in the main text applies to the released noisy
statistics (and any downstream synthetic data derived from them). In the protocol
described here, secure aggregation hides individual client contributions, but the
server recovers the exact global sums before adding DP noise. Accordingly, our threat
model assumes that the server is trusted to execute the DP mechanism on the aggregated
statistics after secure aggregation. An alternative is distributed noise addition, in
which no raw aggregate is ever revealed; we leave this variant for future work.

\paragraph{Setting.}
Each client $i \in \{1,\dots,N\}$ holds a private vector $\mathbf{m}_c^i \in \mathbb{R}^d$ (the class-$c$ latent sum) and a private matrix $\mathbf{M}_c^i \in \mathbb{R}^{d \times d}$ (the class-$c$ outer-product sum). The server requires only the element-wise totals $\mathbf{S}_{\mu,c} = \sum_i \mathbf{m}_c^i$ and $\mathbf{S}_{\Sigma,c} = \sum_i \mathbf{M}_c^i$, and should learn nothing about any individual client's contribution beyond what is inferable from these totals. We describe the protocol for $\mathbf{m}_c^i$; the same procedure is applied independently to each entry of the symmetric upper triangle of $\mathbf{M}_c^i$ and to $n_c^i$.

\paragraph{Masking with pairwise perturbations.}
The basic idea, following~\cite{secagg}, is to mask each client's input with random perturbations that cancel upon summation. Each pair of clients $(i,j)$ agrees on a shared random seed $s_{i,j} = s_{j,i}$ via Diffie-Hellman key agreement, mediated by the server. From this seed, both clients derive a pseudorandom vector $\mathbf{p}_{i,j} \in \mathbb{R}^d$ using a cryptographically secure pseudorandom generator. Client $i$ computes a perturbation by summing over all other clients:
\begin{equation}
\mathbf{r}_i = \sum_{\substack{j \neq i \\ j > i}} \mathbf{p}_{i,j} - \sum_{\substack{j \neq i \\ j < i}} \mathbf{p}_{i,j},
\end{equation}
so that by construction $\sum_{i=1}^N \mathbf{r}_i = \mathbf{0}$, since each pairwise term $\mathbf{p}_{i,j}$ appears once with a positive sign (for the client with smaller index) and once with a negative sign (for the client with larger index). Each client uploads the masked value $\mathbf{y}_i = \mathbf{m}_c^i + \mathbf{r}_i$ to the server, which computes:
\begin{equation}
\sum_{i=1}^N \mathbf{y}_i = \sum_{i=1}^N \mathbf{m}_c^i + \sum_{i=1}^N \mathbf{r}_i = \sum_{i=1}^N \mathbf{m}_c^i = \mathbf{S}_{\mu,c}.
\end{equation}
The server recovers the exact global sum. Under the assumption that the pseudorandom generator is cryptographically secure, each individual $\mathbf{y}_i$ is computationally indistinguishable from uniform randomness to the server, revealing nothing about $\mathbf{m}_c^i$ beyond what is inferable from the global sum $\mathbf{S}_{\mu,c}$.

\paragraph{Dropout resilience.}
If a client $i$ drops out before uploading $\mathbf{y}_i$, its perturbations would not cancel, corrupting the sum. To handle this, each client secret-shares its private key (used to derive the pairwise seeds) among all other clients using a $(t,N)$-threshold scheme such as Shamir's Secret Sharing, where $t > N/2$. If client $i$ drops out, the surviving clients contribute their shares of $i$'s key, allowing the server to reconstruct $i$'s perturbations and subtract them from the running total. This ensures correctness as long as at least $t$ clients complete the protocol. However, if the server were to reconstruct a surviving client's pairwise perturbations, it could potentially unmask that client's input --- this is addressed by double masking.

\paragraph{Double masking.}
Each client $i$ adds a second independent mask $\mathbf{b}_i$ (also secret-shared among other clients) to its upload: $\mathbf{y}_i = \mathbf{m}_c^i + \mathbf{b}_i + \mathbf{r}_i$. During unmasking, the server must choose for each client: either recover the pairwise perturbations $\mathbf{r}_i$ (for genuine dropouts) or recover $\mathbf{b}_i$ (for surviving clients). An honest client will never reveal both types of shares for the same user, ensuring that each client's input remains masked by at least one layer at all times, even if the server falsely claims a client dropped out.

\paragraph{Why this fits our method.}
Standard SecAgg was designed for aggregating model gradients in federated learning, where each client's contribution is a high-dimensional vector and the server needs only the sum. Our setting is a natural fit for the same reason: the per-class statistics $(\mathbf{m}_c^i, \mathbf{M}_c^i, n_c^i)$ are precisely the kind of additive quantities that SecAgg is designed for, and the server never needs to inspect individual contributions. Exploiting the symmetry of $\mathbf{M}_c^i$, each client's payload consists of $K \times (d + d(d{+}1)/2 + 1)$ floats. With $d{=}128$ and $K{=}10$ classes this amounts to approximately 82k floats per client, or 327.5\,KB at 4 bytes per float --- roughly three orders of magnitude smaller than a typical ResNet-18 gradient upload (${\sim}45$\,MB). This makes the pairwise key agreement and masking steps lightweight in both communication and computation.

\end{document}